\documentclass{article}

\usepackage{microtype}
\usepackage{graphicx}
\usepackage{subcaption}
\usepackage{wrapfig}
\usepackage{booktabs} 
\usepackage{multirow}
\usepackage[dvipsnames,table]{xcolor}

\usepackage{hyperref}

\usepackage[accepted]{icml2026}

\usepackage{amsmath}
\usepackage{amssymb}
\usepackage{mathtools}
\usepackage{amsthm}

\usepackage{aliascnt}
\usepackage[capitalize]{cleveref}

\usepackage{enumitem}

\usepackage[mathscr]{eucal}
\usepackage{relsize}
\usepackage{stmaryrd} 
\usepackage{amsmath}
\usepackage{bm}
\usepackage{bbm}
\usepackage{amsthm} 
\usepackage{siunitx}
\usepackage[ruled,algo2e]{algorithm2e}
\SetKwInput{KwInit}{Init}

\SetCommentSty{mycommentfont}
\SetKwComment{tcc}{\# }{}

\DeclarePairedDelimiter{\norm}{\lVert}{\rVert}
\DeclarePairedDelimiter{\normv}{\vert}{\vert}

\newcommand{\ve}[1]{\ensuremath{\bm{\mathbf{#1}}}}
\newcommand{\vzero}{\ve{0}}

\newcommand{\vtheta}{\ve{\theta}}
\newcommand{\vzeta}{\ve{\zeta}}
\newcommand{\vlambda}{\ve{\lambda}}
\newcommand{\va}{\ve{a}}

\newcommand{\vo}{\ve{o}}

\newcommand{\vw}{\ve{w}}
\newcommand{\vx}{\ve{x}}
\newcommand{\vy}{\ve{y}}

\newcommand{\T}{\mathsf T}      
\newcommand{\mat}[1]{\ensuremath{\bm{\mathbf{#1}}}}

\newcommand{\mA}{\mat{A}}
\newcommand{\mB}{\mat{B}}

\newcommand{\mI}{\mat{I}}

\newcommand{\mO}{\mat{O}}

\newcommand{\mX}{\mat{X}}
\newcommand{\mY}{\mat{Y}}
\newcommand{\mZ}{\mat{Z}}

\newcommand{\ten}[1]{\ensuremath{\bm{\mathscr{{#1}}}}}
\newcommand{\vect}[1]{\ensuremath{\text{vec}\left(#1\right)}}

\newcommand{\tC}{\ten{C}}
\newcommand{\tD}{\ten{D}}

\newcommand{\tG}{\ten{G}}

\newcommand{\tO}{\ten{O}}

\newcommand{\tX}{\ten{X}}
\newcommand{\tY}{\ten{Y}}
\newcommand{\tZ}{\ten{Z}}

\newcommand{\g}[1]{\ensuremath{\mathcal{{#1}}}}

\newcommand{\gN}{\g{N}}

\newcommand{\bE}{\mathbb{E}}

\newcommand{\bR}{\mathbb{R}}

\newcommand{\compactcp}[1]{\ensuremath{\mathsf{CP}\left( \ve{\lambda}, {\mat{#1}}^{(1)}, {\mat{#1}}^{(2)}, \ldots, {\mat{#1}}^{(N)} \right) }}
\newcommand{\compactcpshort}[1]{\ensuremath{\mathsf{CP}\left( \ve{\lambda}, {\mat{#1}}^{(1)}, \ldots, {\mat{#1}}^{(N)} \right) }}
\newcommand{\idx}[1]{\ensuremath{\bm{\mathbf{\underline{{#1}}}}}}
\newcommand{\idxi}{\idx{i}}

\newcommand{\StudentT}{\mathrm{Student\mbox{-}t}}

\newcommand{\KL}{D_{\mathrm{KL}}}
\newcommand{\TV}{D_{\mathrm{TV}}}
\newcommand{\eg}{{\emph{e.g.}}}
\newcommand{\ie}{{\emph{i.e.}}}

\newcommand{\iidsim}{\overset{\mathrm{iid}}{\sim}}

\newcommand{\dd}{\mathop{}\!\mathrm{d}}

\newcommand{\diag}{\ensuremath{\text{diag}}}
\newcommand{\indicator}{\mathbbm{1}}

\newcommand{\set}[1]{\ensuremath{\{#1\}}}

\newcommand{\hide}[1]{}

\renewcommand{\to}{\ensuremath{\rightarrow}}

\definecolor{deepblue}{RGB}{25, 65, 125}        
\definecolor{softblue}{RGB}{110, 165, 225}      
\definecolor{warmgray}{RGB}{128, 128, 128}      
\definecolor{lightgray}{RGB}{245, 245, 245}     

\definecolor{emerald}{RGB}{80, 200, 120}        
\definecolor{coral}{RGB}{255, 127, 80}          
\definecolor{amethyst}{RGB}{155, 89, 182}       
\definecolor{turquoise}{RGB}{26, 188, 156}      

\definecolor{dataBlue}{RGB}{55, 126, 184}       
\definecolor{dataOrange}{RGB}{255, 127, 0}      
\definecolor{dataGreen}{RGB}{77, 175, 74}       
\definecolor{dataPurple}{RGB}{152, 78, 163}     
\definecolor{dataRed}{RGB}{228, 26, 28}         
\definecolor{dataBrown}{RGB}{166, 86, 40}       

\definecolor{headerBlue}{RGB}{44, 62, 80}       
\definecolor{headerLight}{RGB}{236, 240, 241}   
\definecolor{accentYellow}{RGB}{241, 196, 15}   
\definecolor{successGreen}{RGB}{39, 174, 96}    
\definecolor{warningOrange}{RGB}{230, 126, 34}  
\definecolor{errorRed}{RGB}{231, 76, 60}        

\definecolor{gradientStart}{RGB}{74, 144, 226}  
\definecolor{gradientMid}{RGB}{126, 214, 223}   
\definecolor{gradientEnd}{RGB}{250, 172, 168}   

\definecolor{softPink}{RGB}{255, 235, 238}      
\definecolor{softLavender}{RGB}{232, 230, 255}  
\definecolor{softMint}{RGB}{230, 255, 230}      
\definecolor{softPeach}{RGB}{255, 243, 224}     

\definecolor{darkSlate}{RGB}{47, 54, 64}        
\definecolor{brightWhite}{RGB}{255, 255, 255}   
\definecolor{midnightBlue}{RGB}{9, 132, 227}    
\definecolor{sunsetOrange}{RGB}{255, 165, 2}    

\hypersetup{
linkcolor=dataGreen,
urlcolor=coral,
}

\crefname{algocf}{Algorithm}{Algorithms}
\Crefname{algocf}{Algorithm}{Algorithms}
\newcommand{\bftab}{\fontseries{b}\selectfont}

\theoremstyle{plain}
\newtheorem{theorem}{Theorem}[section]

\newcommand{\newaliastheorem}[3]{%
  \newaliascnt{#1}{theorem}%
  \newtheorem{#1}[#1]{#2}%
  \aliascntresetthe{#1}%
  \crefname{#1}{#2}{#3}%
  \Crefname{#1}{#2}{#3}%
}
\newaliastheorem{proposition}{Proposition}{Propositions}
\newaliastheorem{lemma}{Lemma}{Lemmas}
\newaliastheorem{corollary}{Corollary}{Corollaries}

\theoremstyle{definition}
\newaliastheorem{definition}{Definition}{Definitions}
\newaliastheorem{assumption}{Assumption}{Assumptions}

\theoremstyle{remark}
\newaliastheorem{remark}{Remark}{Remarks}

\icmltitlerunning{
    Bayesian Tensor Decomposition with Diffusion Model Prior
}

\begin{document}

\twocolumn[
  \icmltitle{
    Bayesian Tensor Decomposition with Diffusion Model Prior
  }



  \icmlsetsymbol{equal}{*}

  \begin{icmlauthorlist}
    \icmlauthor{Zerui Tao}{aip}
    \icmlauthor{Qibin Zhao}{aip}
  \end{icmlauthorlist}

  \icmlaffiliation{aip}{RIKEN Center for Advanced Intelligence Project (AIP), Tokyo, Japan}

  \icmlcorrespondingauthor{Qibin Zhao}{qibin.zhao@riken.jp}

  \icmlkeywords{Machine Learning, ICML, Tensor Decomposition}

  \vskip 0.3in
]



\printAffiliationsAndNotice{}  

\begin{abstract}
  Low-rank tensor decomposition (TD) is usually effective on clean, fully observed data, but it often degrades under severe missingness or noise. Low-rankness is itself a useful but limited \emph{structural} prior, and additional handcrafted priors (\eg, sparsity or smoothness) still fall short of capturing the rich statistics of real-world data.
  To compensate for this weak inductive bias under heavy corruption, one would like to inject a \emph{learned}, data-driven prior; however, the state-of-the-art diffusion models are not readily compatible with current TD and tractable posterior inference.
  To address these challenges,
  we introduce DiffBCP, a \emph{hybrid-prior} Bayesian CP decomposition framework that couples a cumulative shrinkage process prior over the CP factors for automatic rank selection with an off-the-shelf pre-trained diffusion model as an implicit data prior on the reconstructed tensor. To make posterior inference tractable despite the coupling among the likelihood, low-rank constraint, and diffusion prior, we develop a split Gibbs sampler: CP factors admit conjugate updates, while the diffusion block is sampled via low-rank-guided denoising. A noise-adaptive coupling schedule further reduces sensitivity to hand-tuned annealing. Experiments on image inpainting and denoising, including high-resolution out-of-distribution images, show consistent gains over Bayesian, nonlinear, and plug-and-play TD baselines.
\end{abstract}

\section{Introduction}\label{sec:intro}

Tensor decomposition (TD) is a powerful tool for multi-dimensional data analysis, with applications in machine learning, signal processing, and scientific computing \citep{kolda2009tensor,cichocki2016tensor}.
By factorizing high-order tensors into contractions of much smaller factors, TD enables efficient representation, compression, and interpretation of complex data structures. 
One important application is signal recovery from incomplete and noisy observations, where TD can exploit the low-rank structure of the underlying data to reconstruct missing or corrupted entries \citep{sidiropoulos2017tensor}.

In the research field of TD, one scheme is to study effective decomposition structures to incorporate prior beliefs about the data latent representations, such as CP \citep{hitchcock1927expression}, Tucker \citep{tucker1966some}, tensor networks \citep{oseledets2011tensor,zhao2016tensor}, and many other decompositions \citep{kolda2009tensor,kilmer2011factorization,cichocki2016tensor}.
The motivation behind is that when the model structure matches the data hidden pattern better, the TD method can achieve better representation and recovery performances.
Motivated by this, efforts have been made to design more flexible TD models that can learn richer structures from the data, from traditional multi-linear models to non-linear models \citep{zhe2016distributed,liu2019costco}.

Ironically, in real applications such as image processing, we usually observe that, when the data are complete and clean, even the simplest TD structure like CP decomposition achieves strong performances; on the contrary, when the data are partially observed or noisy, more sophisticated TD models may not lead to satisfactory results.
The main challenge is that the low-rank assumption, while itself an effective \emph{structural} prior, becomes insufficient on its own once the observations are noisy or incomplete.
To reduce this gap, previous work augments TD with additional handcrafted priors such as sparsity or smoothness \citep{sun2017provable,wang2017hyperspectral}, but these handcrafted priors still may not fully exploit the rich information present in real-world data.
This raises a natural question: \emph{Can we complement the structural low-rank prior with a more powerful, learned data prior for noisy and incomplete data?}

To address this issue, we adopt a Bayesian framework that can naturally couple a handcrafted structural prior with a learned, data-driven prior in a single probabilistic model. In particular, we propose DiffBCP, a hybrid-prior probabilistic CP decomposition model that leverages the recent advances in diffusion models \citep{karras2022elucidating}.
Concretely, a powerful pre-trained diffusion model is placed in the Bayesian CP model as an implicit prior on the reconstructed tensor, enabling the model to capture complex data distributions effectively. To ensure low-rank structure and automatic rank determination, we adopt the cumulative shrinkage process (CUSP) prior \citep{legramanti2020bayesian} on the CP component weights.
The resulting model combines the strengths of Bayesian tensor decomposition and diffusion models, allowing for robust learning from noisy and partially observed tensors.
To sample from the posterior distribution, we adopt the split Gibbs sampler \citep[SGS,][]{vono2019split} with an auxiliary variable, which decouples the likelihood, the implicit diffusion prior, and the low-rank constraint. This leads to closed-form updates for the latent factors and conjugate sampling.
The diffusion prior block is sampled via low-rank–guided denoising, turning plug-and-play guidance into a structured Bayesian inference procedure.
While the noise level is an important factor in the SGS, our fully Bayesian framework allows us to infer it automatically from the data, eliminating the need for manual tuning.
We conduct experiments on image inpainting and denoising tasks. Results show that our method performs well on various types of images, demonstrating the effectiveness of combining diffusion model priors with Bayesian tensor decomposition.

\subsection{Contributions}

Our main contributions are summarized as follows:
\begin{itemize}[itemsep=2pt, parsep=0pt, topsep=0pt]
    \item We propose a \emph{hybrid-prior} Bayesian CP model with the sparse CUSP prior on the latent factors and powerful pre-trained diffusion model prior on the reconstructed tensor to capture complex data distributions.
    \item The inference is fully probabilistic, giving better noise adaptation and annealing ability.
    \item The proposed method achieves superior performance on image datasets, even outperforms non-linear TDs.
\end{itemize}

\subsection{Related Work}

\paragraph{Bayesian tensor decomposition}
Bayesian TD has been studied for various forms of decompositions, including CP \citep{zhao2015bayesian,rai2014scalable,9747988}, Tucker \citep{schein2016bayesian,fang2021bayesian,Stolf15072025}, tensor train/ring \citep{long2021bayesian,tao2023scalable,XU2023109650}, and non-parametric TD \citep{zhe2016distributed,Tao_Tanaka_Zhao_2024}.
Previous work mainly focuses on developing better priors on the latent factors, for example sparsity-inducing priors \citep{zhou2015bayesian,pmlr-v162-tillinghast22a} or smoothness priors \citep{9380704}. In particular, \citet{Stolf15072025} develop a Bayesian Tucker using a similar CUSP prior on Tucker factor matrices.
Other methods learn more flexible likelihood models, such as mixture-of-Gaussians \citep{chen2016robust} and deep generative models \citep{tao2023undirected,chen2025generating}. However, these works do not consider powerful data-driven priors on the reconstructed tensor itself.

\paragraph{Generative priors for inverse problems}
Generative priors for general inverse problems have a long history, ranging from plug-and-play denoisers \citep{venkatakrishnan2013plug} and regularization-by-denoising \citep{romano2017little} to generator-based priors such as CSGM \citep{bora2017compressed} and Deep Image Prior \citep{ulyanov2018deep}. The rise of diffusion models has produced much stronger natural-image priors, leading to a family of diffusion posterior samplers including DPS \citep{chung2023diffusion}, SNIPS/DDRM \citep{kawar2021snips,kawar2022denoising}, and split-Gibbs-style samplers that embed a diffusion/generative prior, first introduced by \citet{10541919} and later developed by \citet{wu2024principled,xu2024provably,heurtel2024listening}, with the recent benchmark of \citet{zheng2025inversebench}.
In the TD literature, \citet{zhao2020deep,9667278} pair CNN denoiser priors with ADMM, but the denoiser is much less expressive than a diffusion model and the framework is non-probabilistic.

DiffBCP sits at the intersection of these two lines: built on the split Gibbs sampler \citep{vono2019split,10541919,wu2024principled}, it additionally imposes a Bayesian low-rank CP structure with CUSP shrinkage on the factor weights, yielding a \emph{hybrid-prior} formulation within a fully probabilistic framework.

\section{Preliminaries}

\paragraph{Notations}

We use bold lowercase letter, capital letters, and caligraphical capital letters to denote vectors, matrices, and tensors respectively, \eg, $\vx \in \bR^{I}, \mX \in \bR^{I_1 \times I_2}$, and $\tX \in \bR^{I_1 \times I_2 \cdots \times I_N}$. Subscripts represent slices of sub-arrays, \eg, $\mX_{i::}$ is the $i$-th slice of $\tX$ along the first dimension.
The mode-$n$ matricization is denoted as $\mX_{[n]}$, where $\vx_{[n], i_n:} = \vect{\tX_{\cdots i_n \cdots}}, \forall i_n = 1, \ldots, I_n$.
We use underlined vectors to represent the set of indices, \eg, $\idxi = (i_1, i_2, \ldots, i_N)$.
We use $\circ$ to denote outer product, $\odot$ to denote Khatri-Rao product, and $\ast$ to denote Hadamard product.
$\TV$ and $\KL$ denote the total variation and Kullback-Leibler divergence.

\subsection{Tensor Decomposition}

The goal of tensor decomposition is to learn parsimonious representations of multi-dimensional arrays, \eg, $\tX \in \mathbb{R}^{I_1 \times I_2 \times \cdots \times I_N}$, where $N$ is the order of the tensor and $I_n$ is the dimension along the $n$-th mode. In this work, we focus on the CP decomposition, denoted as,
\begin{align}
    \tX &= \compactcp{A} \label{eq:cp} \\
    &= \sum_{r=1}^R \lambda_{r} \va^{(1)}_{:r} \circ \va^{(2)}_{:r} \cdots \circ \va^{(N)}_{:r} \nonumber,
\end{align}
where the number $R$ is the CP rank. The CP form factorizes a tensor into a sum of rank-one tensors. $\mA^{(n)} \in \mathbb{R}^{I_n \times R}, \forall n = 1, \ldots, N$ are called factor matrices, and $\vlambda \in \mathbb{R}^{R}$ is the weight vector.
Usually, TD problems are solved by minimizing the loss functions such as,
\begin{equation*}
    \min_{\vlambda, \mA^{(1)}, \ldots, \mA^{(N)}} \left\| \tO \ast \left( \tY - \compactcpshort{A} \right) \right\|_F^2,
\end{equation*}
where $\tY$ is the observation and $\tO$ is the binary mask. Usually, adding prior information or regularization on the latent factors $\mA^{(1:N)}$ or the estimate tensor $\tX$ can improve the performance. However, these priors are usually heuristic and handcrafted such as smoothness \citep{9380704} or sparsity \citep{zhou2015bayesian,pmlr-v162-tillinghast22a}.

\subsection{Diffusion Model}\label{sec:edm}

Diffusion models have become the most powerful class of generative models, especially for images. Diffusion models have been proposed from different disciplines \citep{sohl2015deep,ho2020denoising,song2021scorebased}. The work of EDM \citep{karras2022elucidating} provides a unified framework to understand different diffusion models. Suppose the data is diffused by $\vx_t = s(t) \vx + \sigma(t) \ve{\epsilon}$, where $s(t)$ and $\sigma(t)$ are two functions of time $t$, and $\ve{\epsilon} \sim \gN(\vzero, \mI)$. Denote $p(\vx_t; \sigma(t))$ as the diffused distribution. The reversed denoising process follows the SDE,
\begin{equation}\label{eq:edm}
\begin{aligned}
    &\dd \vx_t = \left[u(t) \vx_t - v(t) \nabla_{\vx_t} \log p\left( \frac{\vx_t}{s(t)}; \sigma(t) \right)\right] \dd t \\
    &\qquad \qquad + \sqrt{v(t)} \dd \bar{\vw}_t,
\end{aligned}
\end{equation}
where $u(t) = \dot{s}(t)/s(t)$, $v(t) = 2 s(t)^2 \dot{\sigma}(t) \sigma(t)$, and $\bar{\vw}$ is the standard Wiener process.
The only unknown term is the score function $\nabla_{\vx} \log p(\vx; \sigma(t))$, which is approximated by a neural network $s_{\psi}(\vx_t, t)$ and trained by score matching techniques.
By solving the SDE from a large $t$ to $0$, we can sample from the data distribution or denoise noisy data.
Nowadays, diffusion models have become the state-of-the-art generative models, and there are many off-the-shelf pre-trained diffusion models for different types of data.

\section{Proposed Model and Algorithm}

\subsection{Probabilistic CP Decomposition}

To incorporate powerful data priors into the CP decomposition, we develop the probabilistic CP decomposition model.
The joint probability of the proposed decomposition is,
\begin{equation}\label{eq:joint_prob}
\begin{aligned}
    &p(\tY, \tX, \vlambda, \mA^{(1:N)}, \tau) \propto \\
    &\,\, p(\tY \mid \tX, \tau)\, p(\tX \mid \mA^{(1:N)}, \vlambda)\, p(\mA^{(1:N)})\, p(\vlambda)\, p(\tX)\, p(\tau),
\end{aligned}
\end{equation}
where $\vlambda, \mA^{(1:N)}$ are the CP factors in \cref{eq:cp} and $\tX$ is the reconstructed tensor. The structural factor $p(\tX \mid \mA^{(1:N)}, \vlambda) = \delta(\tX - \compactcp{A})$ encodes the low-rank CP constraint; $p(\mA^{(1:N)})$, $p(\vlambda)$, and $p(\tau)$ are the conjugate factor priors specified below; and $p(\tX)$ is a powerful pre-trained diffusion prior placed directly on the reconstructed tensor. Although $\tX$ enters the joint through two factors, they play complementary roles: $p(\tX \mid \mA^{(1:N)}, \vlambda)$ is a hard structural constraint that ties $\tX$ to the CP factors, while $p(\tX)$ is a soft data-driven regularizer on the reconstructed tensor. The two priors $p(\tX)$ and $p(\mA^{(1:N)}) p(\vlambda)$ jointly form a \emph{hybrid prior} that fuses a structural low-rank component with a learned data-driven component in a single posterior. To our knowledge, this is the first Bayesian tensor decomposition that admits an off-the-shelf diffusion model as the data prior, extending the plug-and-play paradigm \citep{venkatakrishnan2013plug,romano2017little,wu2024principled} from generic inverse problems to a fully probabilistic CP model with automatic rank and noise inference.
For the likelihood, we consider each element is independently observed from the Normal distribution,
\begin{equation*}
    y_{\idxi} \mid x_{\idxi}, \tau \sim \mathrm{Normal}(y_{\idxi} \mid x_{\idxi}, \tau^{-1}), \quad \forall \idxi \in \Omega,
\end{equation*}
where $\Omega$ denotes the set of observed entries in $\tO$.

\paragraph{Latent variable priors}

Let $\{\tau, \vlambda, \mA^{(1:N)}\}$ be latent variables.
The precision follows a conjugate Gamma prior,
\begin{equation*}
    \tau \sim \mathrm{Gamma}(\tau \mid \alpha_0, \kappa_0),
\end{equation*}
where $\alpha_0$, $\kappa_0$ are the shape and rate parameters respectively.
For the weight, we adopt the CUmulative Shrinkage Process (CUSP) prior \citep{legramanti2020bayesian}, which provides column-wise shrinkage that, combined with the rank-adaptation procedure below, makes DiffBCP adaptive to unknown CP rank,
\begin{align*}
    &\lambda_r \mid \theta_r \sim \gN(0, \theta_r), \,\, \theta_r \mid \pi_r \sim (1 - \pi_r) P_0 + \pi_r \delta_{\theta_{\infty}}, \\
    &\pi_r = \sum_{l=1}^r \omega_l, \quad \omega_l = \nu_l \prod_{m=1}^{l-1} (1 - \nu_m),
\end{align*}
where $\nu_l \iidsim \mathrm{Beta}(1, \beta)$, $\theta_{\infty}$ is a small positive constant, $\delta_{\theta_{\infty}}$ is the Dirac delta function, and $P_0$ is a starting continuous distribution. As the rank $r$ increases, the distribution of $\lambda_r$ gradually diffuses from $P_0$ to $\delta_{\theta_{\infty}}$ almost surely. This is similar to a spike-and-slab prior, but with hierachical mixture proabability that can increasingly shrink factors. In practice, we set $P_0 = \mathrm{InvGamma}(\alpha_{\theta}, \beta_{\theta})$ with $\alpha_\theta > 1$ and $\theta_{\infty} = \num{1e-3}$.
For the factor matrices, we assume each element follows a Gaussian prior,
\begin{equation*}
    a^{(n)}_{ir} \sim \mathrm{Normal}(0, 1), \quad \forall n, i, r.
\end{equation*}
This prior regularizes the norm of factor matrices, so that we can truncate the rank based on the value of $\vlambda$.

\paragraph{Diffusion model prior}

In the whole probabilistic model in \cref{eq:joint_prob}, we still need to specify the data prior $p(\tX)$. Most existing probabilistic TD methods leave $p(\tX)$ implicit or treat it as a non-informative constant, in which case \cref{eq:joint_prob} reduces to a classical Bayesian CP \citep{rai2014scalable,zhao2015bayesian} (up to the CUSP shrinkage on the weights) and the model only exploits low-rank structure. We instead instantiate $p(\tX)$ as a pre-trained diffusion model, which contributes a learned natural-data expert on the reconstructed tensor and is the key ingredient that allows DiffBCP to recover rich textures and structures under heavy missingness and noise.
We assume the reconstructed tensor $\tX$ follows a diffusion model prior, that means, the diffused proabability follows,
\begin{equation*}
    \nabla_{\tX_t} \log p(\tX_t; \sigma(t)) = s_{\psi}(\tX_t, t),
\end{equation*}
where $s_{\psi}$ is a pre-trained score network. Given this prior, we can sample $\tX$ by solving the SDE in \cref{eq:edm}.

Considering the above probability formulation, we show that the CUSP prior can effectively shrink the residual of the CP decomposition.

\begin{theorem}\label{thm:residual-bound}
    Denote the $\idxi$-th element in the $r$-th component as $x_{r, \idxi} = \lambda_r \prod_{n=1}^N a^{(n)}_{i_n r}$.
    Suppose the factor is bounded by $\eta$ in second moment, \ie, $\bE(a^{(n)}_{i_n r})^2 < \eta_n, \forall n, i_n, r$.
    Then $\forall \epsilon > 0$, we have,
    \begin{align*}
    &p((x_{r, \idxi})^2 \geq \epsilon) \leq \\
    &\qquad \frac{1}{\epsilon} \left[ \left( \frac{\beta}{1 + \beta} \right)^r \left( \frac{\beta_\theta}{\alpha_\theta - 1} - \theta_{\infty} \right) + \theta_{\infty} \right] \prod_{n=1}^N \eta_n.
    \end{align*}
\end{theorem}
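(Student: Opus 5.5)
The plan is to apply Markov's inequality to the nonnegative random variable $(x_{r,\idxi})^2$, which gives $p((x_{r,\idxi})^2 \ge \epsilon) \le \epsilon^{-1}\bE[(x_{r,\idxi})^2]$. It then remains to bound the second moment
\begin{equation*}
\bE[(x_{r,\idxi})^2] = \bE\Big[\lambda_r^2 \prod_{n=1}^N (a^{(n)}_{i_n r})^2\Big].
\end{equation*}
Under the prior, $\lambda_r$ (with its hierarchy $\theta_r,\pi_r,\nu_{1:r}$) is independent of the factor matrices, and the entries $a^{(n)}_{i_n r}$ for different modes $n$ are independent. So the expectation factorizes as $\bE[\lambda_r^2]\prod_n \bE[(a^{(n)}_{i_n r})^2]$. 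The assumed moment bound then gives the factor $\prod_n \eta_n$. Under the standard Normal prior the bound holds with $\eta_n$ slightly above $1$, but I will keep the stated $\eta_n$.

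For $\bE[\lambda_r^2]$ I will condition step by step. First, $\bE[\lambda_r^2\mid\theta_r]=\theta_r$. Next, conditioning on $\pi_r$ gives
\begin{equation*}
\bE[\theta_r\mid\pi_r] = (1-\pi_r)\,\bE_{P_0}[\theta] + \pi_r\theta_\infty .
\end{equation*}
With $P_0=\mathrm{InvGamma}(\alpha_\theta,\beta_\theta)$ and $\alpha_\theta>1$, the mean is $\beta_\theta/(\alpha_\theta-1)$. Rearranging,
\begin{equation*}
\bE[\theta_r\mid\pi_r] = (1-\pi_r)\Big(\tfrac{\beta_\theta}{\alpha_\theta-1}-\theta_\infty\Big)+\theta_\infty ,
\end{equation*}
which is linear in $1-\pi_r$. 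Taking the outer expectation therefore only requires $\bE[1-\pi_r]$.

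The key computation uses the stick-breaking telescoping identity $1-\pi_r = 1-\sum_{l\le r}\omega_l = \prod_{m=1}^r(1-\nu_m)$, which I will verify by induction on $r$. The $\nu_m$ are iid $\mathrm{Beta}(1,\beta)$ with $\bE[1-\nu_m]=\beta/(1+\beta)$, so independence gives $\bE[1-\pi_r]=(\beta/(1+\beta))^r$. Substituting this and multiplying by $\prod_n\eta_n/\epsilon$ yields exactly the stated bound.

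The only delicate points are bookkeeping. First, I must justify the independence between $\lambda_r$ and the $a^{(n)}$, and across modes; this comes directly from the prior factorization $p(\mA^{(1:N)})p(\vlambda)$, and the statement is understood as a prior probability. Second, the coefficient $\frac{\beta_\theta}{\alpha_\theta-1}-\theta_\infty$ could be negative, but the identity above holds regardless because it is an equality, not an inequality. I expect the telescoping identity for $1-\pi_r$ to be the main step to write carefully; everything else is routine.
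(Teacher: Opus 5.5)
Your proposal is correct and follows the paper's proof essentially step for step. Like the paper, you apply Markov's inequality, factorize the second moment by prior independence, and compute $\bE\lambda_r^2$ by conditioning on $\theta_r$ and then $\pi_r$, using the InvGamma mean $\beta_\theta/(\alpha_\theta-1)$. The only difference is cosmetic: you get $\bE[1-\pi_r]=(\beta/(1+\beta))^r$ from the telescoping identity $1-\pi_r=\prod_{m\le r}(1-\nu_m)$, whereas the paper obtains $\bE[\pi_r]$ by summing the geometric series $\sum_{l\le r}\bE[\omega_l]$.
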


\begin{remark}
    The proof is deferred to appendix \cref{app-sec:bcp_proof}.
    Since $\theta_\infty$ is usually set to a small value, the CUSP prior effectively shrinks the tail probability of each component as the rank $r$ increases. We can constrol the tail probability by adjusting the hyperparameter $\beta$. If we desire larger shrinkage, we may set a smaller $\beta$.
\end{remark}

\subsection{Split Gibbs Sampler}

To get posterior samples, we need to sample from the joint distribution,
\begin{equation}\label{eq:original-posterior}
\begin{aligned}
    &\exp(- \ell(\tY; \tX, \tau) - g(\tX) - \\
    &\qquad\qquad f_{\lambda}(\vlambda \mid \vtheta) - f_{\vtheta}(\vtheta) - f_{\mA}(\mA) - f_\tau(\tau)), \\
    &s.t. \quad \tX = \compactcp{A}.
\end{aligned}
\end{equation}
Directly sampling from this distribution is difficult, because the likelihood, the prior on $\tX$, and the prior of factors $\mA$ are coupled together. Moreover, since the diffusion prior on $\tX$ is implicit, even Langevin samplers for $\mA$ are infeasible to implement.

To solve this problem, we adopt the Split Gibbs Sampler \citep[SGS,][]{vono2019split}.
Introducing an augmented variable $\tZ$, the sampling problem becomes,
\begin{align*}
    &\exp(- \ell(\tY; \tX, \tau) - g(\tZ) - \phi(\tZ, \tX; \rho) - \\
    &\qquad\qquad f_{\lambda}(\vlambda \mid \vtheta) - f_{\vtheta}(\vtheta) - f_{\mA}(\mA) - f_\tau(\tau)), \\
    &s.t. \quad \tX = \compactcp{A},
\end{align*}
where $\phi$ is some distance measure and $\rho > 0$ is a hyperparameter controlling the coupling strength. If this distance measure converges to zero when $\tX$ and $\tZ$ are close, the marginal distribution converges to the original distribution \citep{vono2019split}. Then, we can derive the Gibbs sampler of each random variable.
Following \citet{vono2019split,10541919,wu2024principled}, the distance is defined as,
\begin{equation}\label{eq:phi}
    \phi(\tZ, \tX; \rho) = \frac{1}{2\rho^2} \norm{\tZ - \tX}_F^2.
\end{equation}

\subsubsection{Latent Variable Steps}\label{sec:sample_latent}

First, we show how to sample all latent variables including the CP weight, factor matrices, the noise precision, and CUSP parameters.
To sample from the CUSP prior, we introduce an augmentation variable $\ve{\zeta} \in \set{0, 1}$.

\paragraph{Sample $\vlambda$}
For $r = 1, \ldots, R$,
\begin{equation}\label{eq:post_lambda}
    \lambda_r \mid \cdots \sim \gN(\mu_r, \sigma_r),
\end{equation}
where,
\begin{align*}
    &\sigma_r = \left( \theta_r + \tau\norm{\tO \ast \tC^r}^2_F + \frac{1}{\rho^2} \norm{\tC^r}^2_F \right)^{-1}, \quad \mu_r = \\
    & \sigma_r \mathsf{Sum}\left( \tau \tO \ast \tC^r \ast (\tY - \tD^r) + \frac{1}{\rho^2} \tC^r \ast (\tZ - \tD^r) \right),
\end{align*}
with $\tC^r = \va^{(1)}_{:r} \circ \va^{(2)}_{:r} \circ \cdots \circ \va^{(N)}_{:r}$, $\tD^r = \sum_{j \neq r} \lambda_j \tC^j$, and $\mathsf{Sum}(\cdot)$ the sum of all elements.

\paragraph{Sample $\mA^{(n)}$}
For $n = 1, \ldots, N$ and $i = 1, \ldots, I_n$, the full conditional distribution of $\ve{a}^{(n)}_{i:}$ is,
\begin{equation}\label{eq:post_a}
    \ve{a}^{(n)}_{i} | \cdots \sim \gN(\ve{\mu}^{(n)}_{i}, \mat{\Sigma}^{(n)}),
\end{equation}
where,
\begin{align*}
    \mat{\Sigma}^{(n)} &= \left( \mI + \mB^{(n)} \diag(\tau \ve{o}_{[n], i:} + 1/\rho^2) \mB^{(n)\T} \right)^{-1}, \\
    \ve{\mu}^{(n)}_{i} &= \mat{\Sigma}^{(n)} \left( \tau \ve{y}_{[n], i:} \diag(\ve{o}_{[n], i:}) + \frac{1}{\rho^2} \ve{z}_{[n], i:}\right) \mB^{(n)\T} .
\end{align*}
with $\mB^{(n)}$ defined by,
\begin{equation*}
    \mB^{(n)} = \mat{\Lambda} \left( \mA^{(N)} \odot \cdots \mA^{(n+1)} \odot \mA^{(n-1)} \odot \cdots \mA^{(1)} \right)^{\T}.
\end{equation*}

\begin{remark}
    We follow \citet{rai2014scalable} to adaptively adjust the number of components in $\vlambda$ (the CP rank). This procedure is denoted as $\mathsf{adapt\_factor}(\vlambda, \mA)$ and details are shown in the appendix \cref{sec:gibbs_derivation}. At sampling step $i$, we adapt the rank with probability $p(i) = \exp(\alpha_0 + \alpha_1 i)$, where $\alpha_0$ and $\alpha_1$ are hyperparameters. If there are inactive factors, we remove them. If all factors are active, we add a new factor (and associated parameters) from the priors. Together with the CUSP prior, this yields a \emph{bidirectional} rank-adaptation scheme: CUSP shrinks redundant components toward $\delta_{\theta_\infty}$ which are then physically pruned, while a new component is sampled whenever the current rank turns out to be insufficient. Consequently, DiffBCP is robust to mis-specification of the initial rank in either direction.
\end{remark}

\paragraph{Sample $\tau$}

The noise precision can be sampled from a conjugate Gamma distribution,
\begin{equation}\label{eq:post_tau}
    \tau \mid \dots \sim \mathrm{Gamma}\left( \alpha_0 + \frac{\normv{\Omega}}{2}, \kappa_0 + \frac{1}{2} \sum_{\idxi \in \Omega} (y_{\idxi} - x_{\idxi})^2 \right),
\end{equation}
where $\normv{\Omega}$ is the number of observed entries.

\paragraph{Sample $\vzeta$} For $r = 1, \ldots, R$, the posterior $\zeta_r = h \mid \cdots \propto$
\begin{equation}\label{eq:post_zeta}
    \omega_h \cdot \begin{cases}
        \mathrm{Normal}(\lambda_r; 0, \theta_\infty), & h \leq r \\
        \StudentT_{2\alpha_\theta}(\lambda_r; 0, \beta_\theta / \alpha_\theta), & h > r.
    \end{cases}
\end{equation}

\paragraph{Sample $\ve{\nu}$}
For $r = 1, \ldots, R-1$, the posterior $\nu_r \mid \cdots \sim$
\begin{equation}\label{eq:post_nu}
    \mathrm{Beta}\left(1 + \sum_{h=1}^R \indicator(\zeta_h = r), \beta + \sum_{h=1}^R \indicator(\zeta_h > r)\right),
\end{equation}
and set $\nu_R = 1$.

\paragraph{Sample $\vtheta$}
For $r = 1, \ldots, R$, the posterior $\theta_r \mid \cdots \sim$
\begin{equation}\label{eq:post_theta}
    \mathrm{InvGamma}\left(\alpha_\theta + \frac{1}{2}, \beta_\theta + \frac{\lambda_r^2}{2}\right)^{\indicator(\zeta_r > r)} \cdot \delta_{\theta_\infty}^{\indicator(\zeta_r \leq r)}.
\end{equation}

\begin{algorithm2e}[t]
    \caption{Gibbs sampler for DiffBCP}\label{alg:main}
    \KwIn{Data $\tY$, Diffusion Model $s_{\psi}$, Coupling constant $c$, Hyper-priors.}
    \KwOut{Posterior samples of CP factors $\vlambda, \mA^{(1:N)}$.}
    \For{$i = 1, \dots, M$}{
        Compute $\rho = \sqrt{c / \tau}$\;
        Sample $\vlambda$ from \cref{eq:post_lambda}\;
        Sample $\mA^{(n)}, \forall n$ from \cref{eq:post_a}\;
        $\mathsf{adapt\_factor}(\vlambda, \mA^{(1:N)})$\;
        Sample $\tau$ from \cref{eq:post_tau}\;
        Sample $\zeta_r, \forall r$ from \cref{eq:post_zeta}\;
        Sample $\nu_r, \forall r$ from \cref{eq:post_nu}\;
        Sample $\theta_r, \forall r$ from \cref{eq:post_theta}\;
        Find $T_i$ according to $\sigma(T_i) = \rho$\;
        \For{$t = T_i, \dots, 1$}{
            Denoising step by \cref{eq:edm}\;
        }
        Collect samples after burn-in\;
    }
\end{algorithm2e}
\vspace{-1em}

\subsubsection{Augmented Variable Step}\label{sec:sample_z}

Second, we show how to sample the augmented variable given the diffusion model prior.
Suppose we have a pre-trained score network $s_{\psi}(\tX_t, t)$, following the EDM framework described in \cref{sec:edm}.
The conditional distribution of $\tZ$ is,
\begin{equation}\label{eq:post_z}
    \tZ \mid \cdots \sim \exp( - g(\tZ) - \phi(\tZ, \tX; \rho)),
\end{equation}
where $g(\tZ)$ is the implicit prior defined by the diffusion model.
Since the distance $\phi$ is defined as the squared error in \cref{eq:phi}, sampling from the posterior \cref{eq:post_z} is essentially a denoising problem with prior $g(\tZ)$ and observation $\tX$ with noise level $\rho$ \citep{10541919,wu2024principled}.
Different from PnP-DM \citep{wu2024principled} where the guidance is unconstrained, here the guidance $\tX$ is constrained by the CP decomposition structure.
The sampling can be solved by running the EDM denoising process \cref{eq:edm}. Specifically,
for given noise level $\rho$, we find the corresponding time step $t$ such that $\sigma(t) = \rho$. Then, we solve the SDE in \cref{eq:edm} from $t$ to $0$ to get the sample of $\tZ$.

\paragraph{Noise-adaptive coupling schedule}
As we will discuss in the next subsection, the coupling parameter $\rho$ plays an important role in the sampling process. Thus, we need to design a proper annealing schedule for $\rho$.
In \citet{10541919,wu2024principled,xu2024provably}, heuristic and deterministic schedules are tested. However, the performances may be sensitive to the choice of scheduler.
In our formulation, $\tau$ and $\rho$ always appear \emph{jointly} in the conditional updates, \eg, \cref{eq:post_lambda,eq:post_a}.
Thus the optimization/sampling landscape seen by the latent variables is shaped by the \emph{relative} magnitude between $\tau$ and $\rho^{-2}$ rather than $\rho$ alone.
Motivated by this, we set $\tau \rho^2 = c$, where $c$ is a hyperparameter constant. 
When $\tau$ is inferred during the sampling procedure, this rule automatically adjusts $\rho$ to maintain a fixed relative scale between the likelihood and coupling terms.

The overall algorithm is summarized in \cref{alg:main}.
More details and derivation can be find in appendix \cref{sec:gibbs_derivation}.

\subsubsection{Theoretical Properties}

In this subsection, we investigate the approximations in the sampling algorithm and how the noise parameter $\rho$ affects the sampling process.
Define the set of all hyperparameters ($\tau, \ve{\zeta}, \ve{\nu}, \ve{\theta}$) and CP factors ($\vlambda, \mA^{(1:N)}$) as $\Theta$.
The $\rho$-smoothed prior induced by the diffusion model is,
\begin{equation*}
    p_\rho(\tX) = \frac{1}{C_\rho} \int p(\tZ) \exp(- \varphi(\tZ, \tX; \rho) ) \dd \tZ,
\end{equation*}
where $C_\rho$ is the normalizing constant. Denote the marginalized posterior as,
\begin{equation}\label{eq:marginal-posterior}
    \pi_{\rho, \tX}(\Theta) \propto p(\tY \mid \tX(\Theta), \tau) p(\Theta) p_\rho(\tX(\Theta)),
\end{equation}
and the original posterior $\pi_0(\Theta)$ defined in \cref{eq:original-posterior}.
Under some conditions on the joint probability, \citet{vono2019split} shows that the smoothed posterior $\pi_{\rho, \tX}(\Theta)$ converges to the original posterior $\pi_0(\Theta)$ as $\rho \to 0$, i.e.,
\begin{equation}\label{eq:tv_convergence}
    \lim_{\rho \to 0} \TV(\pi_{\rho, \tX}, \pi_0) = 0.
\end{equation}

This indicates that by choosing a smaller $\rho$, we can reduce the bias introduced by the split Gibbs sampler.
However, using small $\rho$ also makes the denoising step more challenging, especially when the score function is approximated by a neural network.
Let $\pi_\rho(\tZ,\Theta)$ be the target joint density of the augmented form,
\begin{equation}\label{eq:augmented-posterior}
    \pi_\rho(\tZ, \Theta) \propto p(\tY \mid \tX(\Theta), \tau) p(\Theta) p(\tZ) \exp( - \phi(\tZ, \tX; \rho)),
\end{equation}
with conditionals $\pi_\rho(\tZ \mid \Theta)$ and $\pi_\rho(\Theta\mid \tZ)$, and let $\pi_{\rho,\tX}(\Theta)=\int \pi_\rho(\tZ,\Theta)\,d\tZ$ be its marginal on $\Theta$.
Assume that the practical diffusion sampler used in the denoising step induces a conditional density $q_\rho(\tZ \mid \Theta) \approx \pi_\rho(\tZ \mid \Theta)$ (for each fixed $\Theta$), and consider the inexact block Gibbs chain that alternates
\[
\tZ^{k+1}\sim q_\rho(\,\cdot\mid \Theta^{k}),\qquad
\Theta^{k+1}\sim \pi_\rho(\,\cdot\mid \tZ^{k+1}).
\]
Assume moreover that the two families of conditionals $\pi_\rho(\Theta\mid \tZ)$ and $q_\rho(\tZ \mid \Theta)$ are compatible, so that the above Markov chain admits a stationary joint distribution $\tilde\pi_\rho(\tZ,\Theta)$. Denote $\tilde\pi_{\rho,\tX}(\Theta)=\int \tilde\pi_\rho(\tZ,\Theta) \dd \tZ$ and $\pi_{\rho,\tZ}(\tZ)=\int \pi_\rho(\tZ,\Theta) \dd \Theta$.
The following theorem quantifies the stationary bias on $\Theta$ induced by the inexact diffusion denoising step.

\begin{theorem}
\label{thm:stationary_bias}
For any $\Theta$ in the support of $\pi_{\rho,\tX}$,
the stationary bias on $\Theta$ can be quantified as
\begin{equation}\label{eq:kl_bias_theta}
    \begin{aligned}
    &\KL\!\left(\pi_{\rho,\tX}\,\|\,\tilde\pi_{\rho,\tX}\right)
    = \\
    &\qquad \quad \mathbb{E}_{\Theta\sim \pi_{\rho,\tX}}
    \left[
    \log\,
    \mathbb{E}_{\tZ \sim \pi_{\rho,\tZ}}
    \left(
    \frac{q_\rho(\tZ\mid \Theta)}{\pi_\rho(\tZ\mid \Theta)}
    \right)
    \right].
    \end{aligned}
\end{equation}
\end{theorem}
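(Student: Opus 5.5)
The plan is to express both $\Theta$-marginals $\pi_{\rho,\tX}$ and $\tilde\pi_{\rho,\tX}$ \emph{exactly}, not just up to a constant, through their conditional densities. I would use the elementary identity behind Brook's lemma / the Hammersley–Clifford construction: if a joint density $\mu(\tZ,\Theta)$ has conditionals $a(\Theta\mid\tZ)$ and $b(\tZ\mid\Theta)$, then its marginal on $\Theta$ is
\[
\mu(\Theta) = \Bigl[\int \frac{b(\tZ\mid\Theta)}{a(\Theta\mid\tZ)} \dd\tZ\Bigr]^{-1}.
\]
This follows from $\mu(\tZ) = \mu(\Theta)\, b(\tZ\mid\Theta)/a(\Theta\mid\tZ)$, which is valid for any fixed $\Theta$ with $\mu(\Theta)>0$: integrating over $\tZ$ and using $\int \mu(\tZ)\dd\tZ = 1$ gives the formula. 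Normalization is therefore automatic, and no unknown constants appear.

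\textbf{Step 1 (stationary joint).} By the compatibility assumption, the inexact chain has stationary joint $\tilde\pi_\rho$, whose conditionals are $\tilde\pi_\rho(\tZ\mid\Theta) = q_\rho(\tZ\mid\Theta)$ and $\tilde\pi_\rho(\Theta\mid\tZ) = \pi_\rho(\Theta\mid\tZ)$. The identity then gives
\[
\tilde\pi_{\rho,\tX}(\Theta) = \Bigl[\int q_\rho(\tZ\mid\Theta)/\pi_\rho(\Theta\mid\tZ) \dd\tZ\Bigr]^{-1}.
\]
This step needs a support condition: $\pi_\rho(\Theta\mid\tZ)>0$ wherever $q_\rho(\tZ\mid\Theta)>0$. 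I expect this to be the main point of care. The Gaussian coupling $\exp(-\phi)$ in \cref{eq:augmented-posterior} makes $\pi_\rho(\Theta\mid\tZ)$ positive for all $\tZ$ whenever $\Theta$ lies in the support of the likelihood times the prior, which is exactly the "support of $\pi_{\rho,\tX}$" hypothesis in the statement.

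\textbf{Step 2 (Bayes rewrite).} By Bayes' rule for the exact augmented target,
\[
\pi_\rho(\Theta\mid\tZ) = \pi_\rho(\tZ\mid\Theta)\,\pi_{\rho,\tX}(\Theta)/\pi_{\rho,\tZ}(\tZ).
\]
Substituting this into the integral gives
\[
\int \frac{q_\rho(\tZ\mid\Theta)}{\pi_\rho(\Theta\mid\tZ)} \dd\tZ = \frac{1}{\pi_{\rho,\tX}(\Theta)}\, \mathbb{E}_{\tZ\sim\pi_{\rho,\tZ}}\Bigl[\frac{q_\rho(\tZ\mid\Theta)}{\pi_\rho(\tZ\mid\Theta)}\Bigr].
\]
Hence
\[
\tilde\pi_{\rho,\tX}(\Theta) = \pi_{\rho,\tX}(\Theta)\Big/\mathbb{E}_{\tZ\sim\pi_{\rho,\tZ}}\bigl[q_\rho(\tZ\mid\Theta)/\pi_\rho(\tZ\mid\Theta)\bigr].
\]
This uses $\pi_\rho(\tZ\mid\Theta)>0$ on the relevant set, which again follows from the Gaussian coupling. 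As a sanity check, setting $q_\rho = \pi_\rho$ gives inner expectation $\int \pi_{\rho,\tZ} = 1$, so the two marginals coincide.

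\textbf{Step 3 (plug into KL).} Since
\[
\log\bigl(\pi_{\rho,\tX}(\Theta)/\tilde\pi_{\rho,\tX}(\Theta)\bigr) = \log \mathbb{E}_{\tZ\sim\pi_{\rho,\tZ}}\bigl[q_\rho(\tZ\mid\Theta)/\pi_\rho(\tZ\mid\Theta)\bigr],
\]
taking the expectation under $\Theta\sim\pi_{\rho,\tX}$ yields \cref{eq:kl_bias_theta} directly.

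Along the way I would note that the inner expectation is finite and positive. It is positive because $q_\rho$ has positive mass where $\pi_{\rho,\tZ}>0$, and it is finite because it equals $\pi_{\rho,\tX}(\Theta)/\tilde\pi_{\rho,\tX}(\Theta)$, so the logarithm is well defined. Beyond these support issues, everything is routine algebra.
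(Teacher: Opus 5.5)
Your proposal is correct and follows essentially the paper's argument. You use compatibility to write $\tilde\pi_{\rho,\tX}$ via the reciprocal-integral (Brook-type) identity, then rewrite $\pi_\rho(\Theta\mid\tZ)$ by Bayes' rule; you substitute Bayes directly, whereas the paper detours through computing $\pi_{\rho,\tX}$ by the same identity and subtracting the reciprocals. One small correction: positivity of $\pi_\rho(\tZ\mid\Theta)\propto p(\tZ)\,e^{-\phi}$ requires $p(\tZ)>0$ wherever $q_\rho(\tZ\mid\Theta)>0$ (i.e.\ $q_\rho(\cdot\mid\Theta)\ll\pi_\rho(\cdot\mid\Theta)$), which the Gaussian coupling alone does not give, and which the paper also leaves implicit.
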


The proof follows \citet{heurtel2024listening} and we defer it to appendix \cref{app-sec:gibbs_proof}.
\cref{eq:tv_convergence,thm:stationary_bias}
can be used to explain the overall bias in the following corollary.

\begin{table*}[t]
    \centering
    \caption{Results for FFHQ and ImageNet datasets. PSNR$\uparrow$, SSIM$\uparrow$, and LPIPS$\downarrow$ metrics are reported, while SSIM and LPIPS are multiplied by 100 for better readability. Purple rows indicate TDs with deep (neural network) structures. Green rows indicate TDs with smooth or pre-trained model priors.}\label{tab:ffhq_denoising}
    \resizebox{0.98\textwidth}{!}{%
    \begin{tabular}{l|ccc|ccc|ccc|ccc}
        \toprule
        \multicolumn{1}{l}{} & \multicolumn{3}{c}{\textbf{Uniform(0.7)}} & \multicolumn{3}{c}{\textbf{Uniform(0.9)}} & \multicolumn{3}{c}{\textbf{Stripe}} & \multicolumn{3}{c}{\textbf{Irregular}} \\
        \cmidrule(lr){2-4} \cmidrule(lr){5-7} \cmidrule(lr){8-10} \cmidrule(lr){11-13}
        \multicolumn{1}{l}{\textbf{Method}} & \textbf{PSNR} & \textbf{SSIM} & \multicolumn{1}{c}{\textbf{LPIPS}} & \textbf{PSNR} & \textbf{SSIM} & \multicolumn{1}{c}{\textbf{LPIPS}} & \textbf{PSNR} & \textbf{SSIM} & \multicolumn{1}{c}{\textbf{LPIPS}} & \textbf{PSNR} & \textbf{SSIM} & \textbf{LPIPS} \\
        \midrule
        \rowcolor{headerLight}\multicolumn{13}{c}{\emph{FFHQ}} \\
        \midrule
        BCP & 26.28 & 64.53 & 51.56 & 21.61 & 44.56 & 61.95 & 9.26 & 25.28 & 65.68 & 22.64 & 62.52 & 47.30 \\
        BTR & 27.32 & 68.22 & 45.16 & 24.22 & 58.58 & 56.23 & 19.63 & 50.35 & 53.22 & 24.91 & 66.74 & 43.98 \\
        \rowcolor{softLavender} HLRTF & 28.16 & 77.92 & 40.94 & 25.12 & 66.55 & 52.08 & 25.21 & 75.15 & 40.20 & 25.84 & 77.42 & 37.27 \\
        \rowcolor{softLavender} DeepTensor & 28.23 & 72.23 & 30.83 & 26.11 & 61.66 & 40.65 & 26.44 & 75.70 & 29.53 & 28.01 & 79.49 & 26.19 \\
        \rowcolor{softMint} tCTV & 26.40 & 59.18 & 38.19 & 25.18 & 59.94 & 48.16 & 24.32 & 58.61 & 43.59 & 24.97 & 53.98 & 36.03 \\
        \rowcolor{softMint} GLON & 13.02 & 27.18 & 62.89 & 7.47 & 17.65 & 36.83 & 21.38 & 53.53 & 47.84 & 22.69 & 53.21 & 40.89 \\
        \midrule
        \rowcolor{softMint} DiffBCP & \bftab 32.13 & \bftab 88.83 & \bftab 17.70 & \bftab 28.28 & \bftab 81.18 & \bftab 28.93 & \bftab 27.91 & \bftab 85.11 & \bftab 19.89 & \bftab 30.34 & \bftab 88.60 & \bftab 15.98 \\
        \midrule
        \rowcolor{headerLight}\multicolumn{13}{c}{\emph{ImageNet}} \\
        \midrule
        BCP & 24.34 & 61.07 & 47.97 & 20.17 & 39.71 & 59.46 & 10.05 & 28.62 & 62.56 & 21.33 & 61.21 & 41.93 \\
        BTR & 25.03 & 63.44 & 43.58 & 21.80 & 48.37 & 56.15 & 18.68 & 49.45 & 49.79 & 22.44 & 60.81 & 42.94 \\
        \rowcolor{softLavender} HLRTF & 26.00 & 73.94 & 38.56 & 22.65 & 59.35 & 50.66 & 23.30 & 70.09 & 38.34 & 22.01 & 71.83 & 35.73 \\
        \rowcolor{softLavender} DeepTensor & 26.03 & 68.81 & 32.60 & 23.74 & 56.46 & 42.85 & 23.87 & 68.64 & 32.60 & 25.16 & 73.10 & 29.30 \\
        \rowcolor{softMint} tCTV & 24.94 & 60.13 & 36.78 & 23.03 & 54.83 & 48.50 & 22.87 & 58.14 & 41.47 & 23.62 & 55.96 & 33.89 \\
        \rowcolor{softMint} GLON & 12.47 & 27.46 & 60.45 & 7.02 & 15.39 & 70.43 & 20.65 & 53.14 & 45.60 & 21.54 & 54.86 & 41.01 \\
        \midrule
        \rowcolor{softMint} DiffBCP & \bftab 28.95 & \bftab 84.53 & \bftab 21.27 & \bftab 25.03 & \bftab 71.49 & \bftab 35.58 & \bftab 25.07 & \bftab 77.58 & \bftab 24.83 & \bftab 27.02 & \bftab 83.09 & \bftab 19.66 \\
        \bottomrule
        \bottomrule
    \end{tabular}
    }
    \vspace{-1em}
\end{table*}

\begin{corollary}\label{cor:end_to_end_decomposition}
For any bounded measurable function $f$,
\begin{equation}
\label{eq:end_to_end_kl}
\begin{aligned}
&\big|\mathbb{E}_{\tilde\pi_{\rho,\tX}}[f(\Theta)]
-\mathbb{E}_{\pi_{0,\tX}}[f(\Theta)]\big|
\le \\
&\qquad \big|\mathbb{E}_{\pi_{\rho,\tX}}[f(\Theta)]
-\mathbb{E}_{\pi_{0,\tX}}[f(\Theta)]\big|
+ \\
&\qquad\qquad \|f\|_\infty\,\sqrt{2 \KL\!\left(\pi_{\rho,\tX}\,\|\,\tilde\pi_{\rho,\tX}\right)},
\end{aligned}
\end{equation}
where the last term is given by \cref{eq:kl_bias_theta}.
\end{corollary}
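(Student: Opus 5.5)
The plan is to split the error with the triangle inequality, inserting the exact smoothed marginal $\pi_{\rho,\tX}$ between the stationary marginal $\tilde\pi_{\rho,\tX}$ and the target $\pi_{0,\tX}$:
\begin{align*}
\big|\mathbb{E}_{\tilde\pi_{\rho,\tX}}[f]-\mathbb{E}_{\pi_{0,\tX}}[f]\big|
&\le \big|\mathbb{E}_{\pi_{\rho,\tX}}[f]-\mathbb{E}_{\pi_{0,\tX}}[f]\big| \\
&\quad + \big|\mathbb{E}_{\tilde\pi_{\rho,\tX}}[f]-\mathbb{E}_{\pi_{\rho,\tX}}[f]\big|.
\end{align*}
The first term is kept as it appears in \cref{eq:end_to_end_kl}. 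It is the splitting bias, which by \cref{eq:tv_convergence} vanishes as $\rho\to 0$. So the whole task reduces to bounding the second term, which is the stationary bias from the inexact denoiser.

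For the second term, I would use the standard bound of an expectation difference by total variation. For bounded measurable $f$ and probability measures $\mu,\nu$, write $|\mathbb{E}_\mu f-\mathbb{E}_\nu f| = |\int f\,(\mu-\nu)|$. Taking $\TV$ as $\sup_A|\mu(A)-\nu(A)|$, this is at most $2\|f\|_\infty \TV(\mu,\nu)$. A sharper version recenters $f$ by the midpoint of its range, which gives $(\sup f-\inf f)\TV(\mu,\nu)\le 2\|f\|_\infty\TV(\mu,\nu)$.

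Next I apply Pinsker's inequality, $\TV(\mu,\nu)\le\sqrt{\tfrac12\KL(\mu\|\nu)}$, with $\mu=\pi_{\rho,\tX}$ and $\nu=\tilde\pi_{\rho,\tX}$. This gives $2\|f\|_\infty\sqrt{\tfrac12 \KL} = \|f\|_\infty\sqrt{2\KL(\pi_{\rho,\tX}\|\tilde\pi_{\rho,\tX})}$, which is exactly the last term of \cref{eq:end_to_end_kl}. Substituting the identity of \cref{thm:stationary_bias} then expresses this term as in \cref{eq:kl_bias_theta}.

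The only delicate point is bookkeeping of constants, namely which convention for $\TV$ is used. If $\TV$ is defined instead as the full $L^1$ distance $\int|\mu-\nu|$, Pinsker reads $\TV\le\sqrt{2\KL}$. In that case $|\int f(\mu-\nu)|\le\|f\|_\infty\TV$ gives the same final constant directly, so the result holds under either convention. A minor measure-theoretic caveat is that the KL term may be infinite, in which case the bound is trivial. Absolute continuity is implicit in \cref{thm:stationary_bias}, and nothing further is needed.
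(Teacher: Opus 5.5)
Your proposal is correct and follows the paper's proof essentially verbatim: the triangle inequality through $\pi_{\rho,\tX}$, then the bound $|\mathbb{E}_P f-\mathbb{E}_Q f|\le 2\|f\|_\infty\TV(P,Q)$, then Pinsker's inequality. Applying Pinsker with $\pi_{\rho,\tX}$ as the first argument, which is legitimate because $\TV$ is symmetric, gives the KL in the same direction as the statement and \cref{eq:kl_bias_theta}; the paper's final line instead writes $\KL(\tilde\pi_{\rho,\tX}\,\|\,\pi_{\rho,\tX})$.
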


\begin{remark}
The first term on the RHS of
\cref{eq:end_to_end_kl} converges to $0$ as $\rho\to 0$ by \cref{eq:tv_convergence}. The second term instead quantifies the stationary bias induced by the inexact diffusion step, as characterized in \cref{thm:stationary_bias}. It becomes smaller as $q_\rho(\tZ\mid \Theta)$ gets closer to $\pi_\rho(\tZ\mid \Theta)$, which is expected to happen when $\rho$ increases, since the denoising becomes easier.
Thus there is a trade-off in the choice of $\rho$.
\end{remark}

\section{Experiments}

Experiments are conducted on single NVIDIA A100 GPU with 40GB VRAM unless otherwise specified.
More details and results are provided in appendix \cref{app-sec:experiments}.
The code is available at \url{https://github.com/taozerui/DiffBCP}.

\begin{figure*}[t]
    \centering
    \begin{subfigure}[b]{0.105\textwidth}
        \centering
        \includegraphics[width=\textwidth]{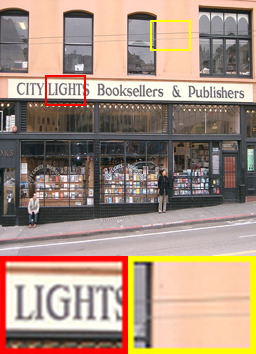}
        \includegraphics[width=\textwidth]{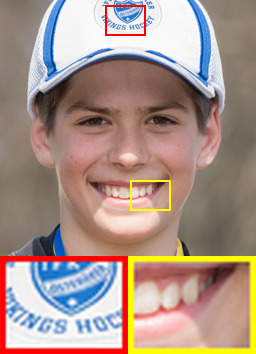}
        \includegraphics[width=\textwidth]{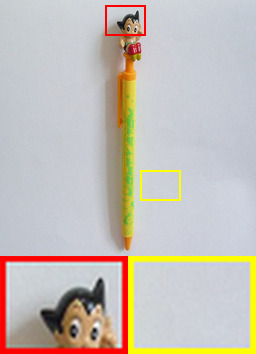}
        \includegraphics[width=\textwidth]{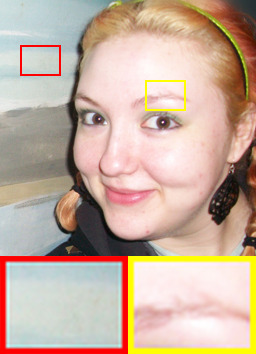}
        \caption*{Original}
    \end{subfigure}
    \begin{subfigure}[b]{0.105\textwidth}
        \centering
        \includegraphics[width=\textwidth]{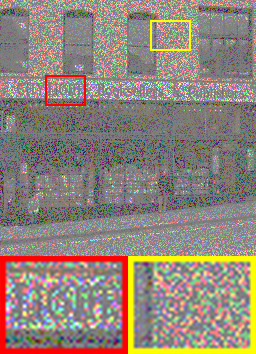}
        \includegraphics[width=\textwidth]{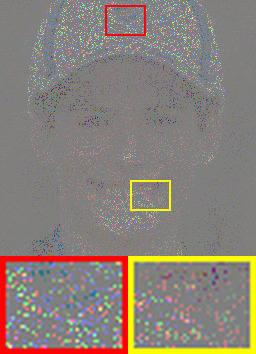}
        \includegraphics[width=\textwidth]{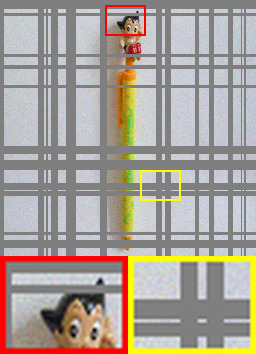}
        \includegraphics[width=\textwidth]{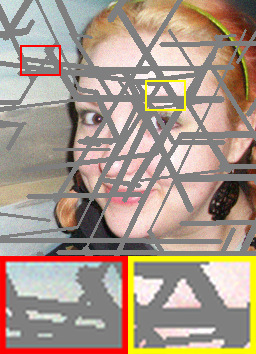}
        \caption*{Observe}
    \end{subfigure}
    \begin{subfigure}[b]{0.105\textwidth}
        \centering
        \includegraphics[width=\textwidth]{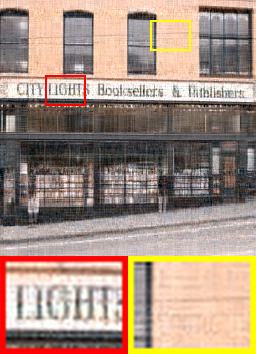}
        \includegraphics[width=\textwidth]{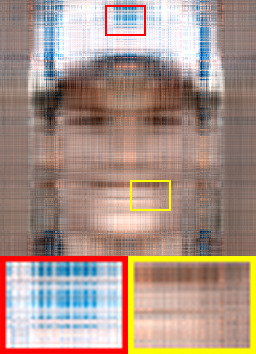}
        \includegraphics[width=\textwidth]{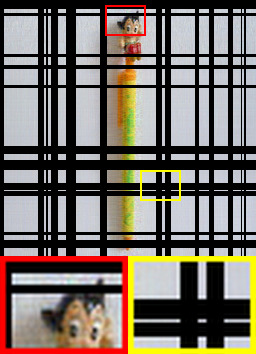}
        \includegraphics[width=\textwidth]{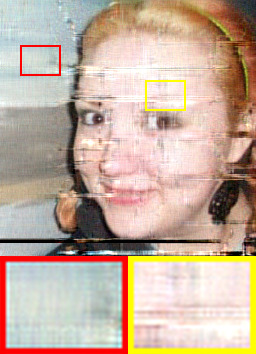}
        \caption*{BCP}
    \end{subfigure}
    \begin{subfigure}[b]{0.105\textwidth}
        \centering
        \includegraphics[width=\textwidth]{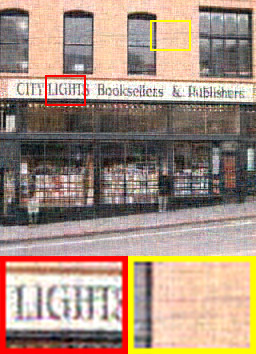}
        \includegraphics[width=\textwidth]{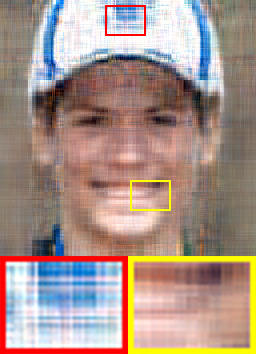}
        \includegraphics[width=\textwidth]{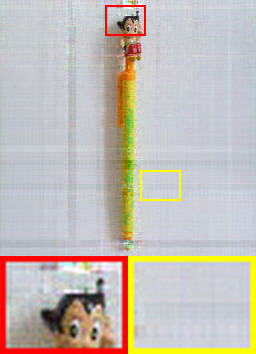}
        \includegraphics[width=\textwidth]{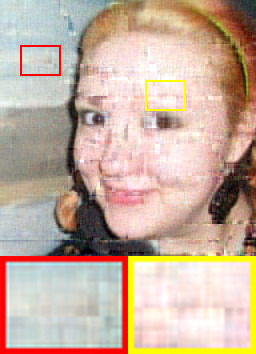}
        \caption*{BTR}
    \end{subfigure}
    \begin{subfigure}[b]{0.105\textwidth}
        \centering
        \includegraphics[width=\textwidth]{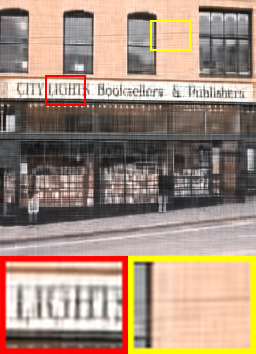}
        \includegraphics[width=\textwidth]{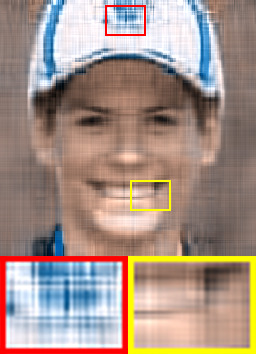}
        \includegraphics[width=\textwidth]{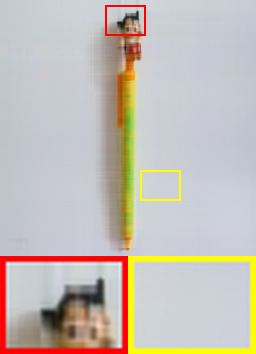}
        \includegraphics[width=\textwidth]{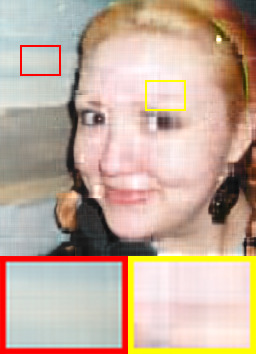}
        \caption*{HLRTF}
    \end{subfigure}
    \begin{subfigure}[b]{0.105\textwidth}
        \centering
        \includegraphics[width=\textwidth]{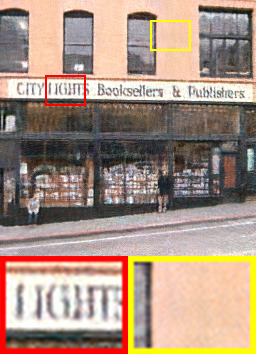}
        \includegraphics[width=\textwidth]{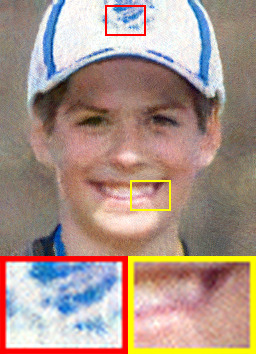}
        \includegraphics[width=\textwidth]{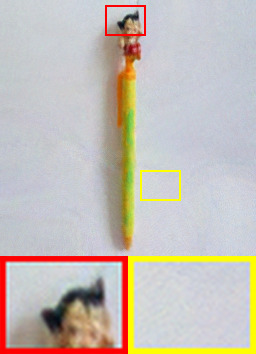}
        \includegraphics[width=\textwidth]{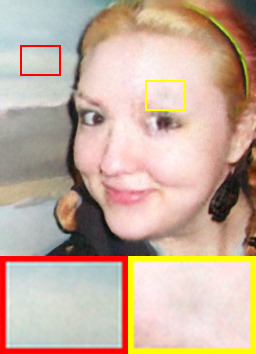}
        \caption*{DeepTensor}
    \end{subfigure}
    \begin{subfigure}[b]{0.105\textwidth}
        \centering
        \includegraphics[width=\textwidth]{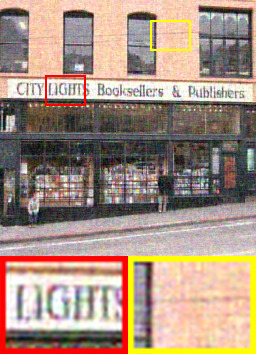}
        \includegraphics[width=\textwidth]{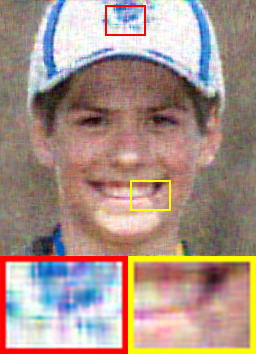}
        \includegraphics[width=\textwidth]{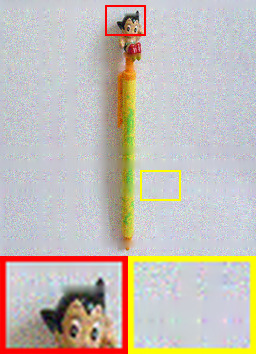}
        \includegraphics[width=\textwidth]{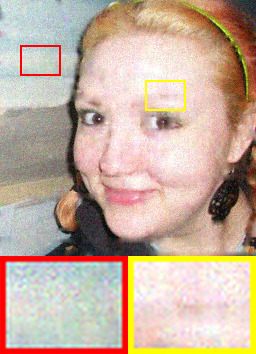}
        \caption*{tCTV}
    \end{subfigure}
    \begin{subfigure}[b]{0.105\textwidth}
        \centering
        \includegraphics[width=\textwidth]{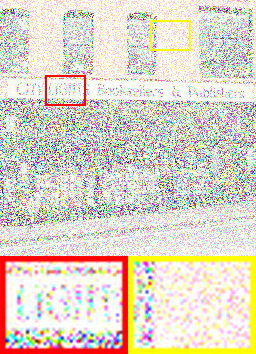}
        \includegraphics[width=\textwidth]{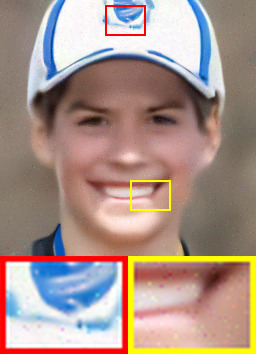}
        \includegraphics[width=\textwidth]{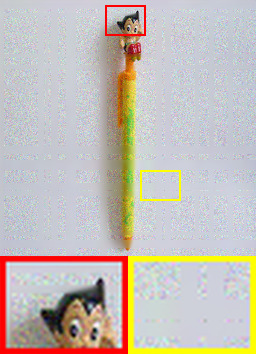}
        \includegraphics[width=\textwidth]{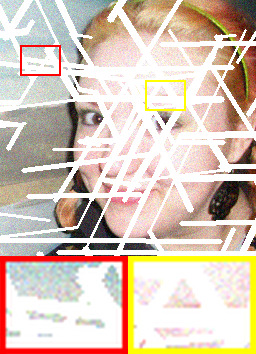}
        \caption*{GLON}
    \end{subfigure}
    \begin{subfigure}[b]{0.105\textwidth}
        \centering
        \includegraphics[width=\textwidth]{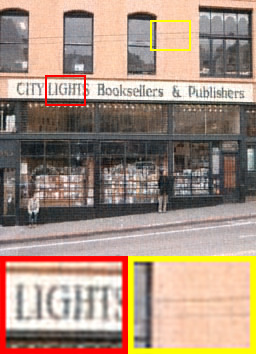}
        \includegraphics[width=\textwidth]{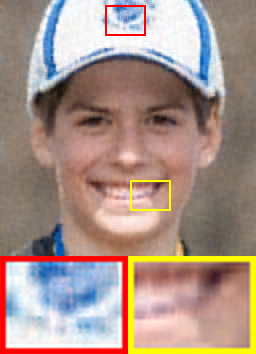}
        \includegraphics[width=\textwidth]{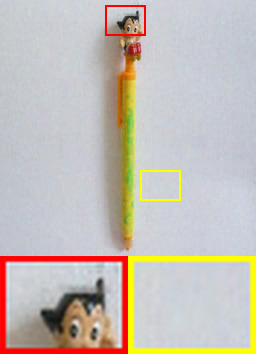}
        \includegraphics[width=\textwidth]{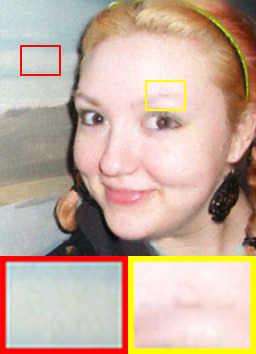}
        \caption*{DiffBCP}
    \end{subfigure}
    \caption{Visualization for FFHQ and ImageNet datasets. From top to bottom: ImageNet with Uniform(0.7), FFHQ with Uniform(0.9), ImageNet with Stripe, and FFHQ with Irregular masks.}\label{fig:denoising_results}
    \vspace{-1em}
\end{figure*}

\subsection{Image Inpainting and Denoising}

In this experiment, we evaluate our method on natural image inpainting and denoising tasks, with pre-trained diffusion models on corresponding datasets as priors.

\paragraph{Datasets and pre-trained models}
We adopt FFHQ \citep{Karras_2019_CVPR} and ImageNet \citep{deng2009imagenet}.
We use the pre-trained diffusion model for both datasets with checkpoints from \citet{chung2023diffusion}.
For each dataset, 128 images from the test set are randomly selected for evaluation. We adopt three types of masks for inpainting: (1) Uniform(0.9/0.7) randomly masks 90\%/70\% pixels in a uniform manner; (2) Stripe randomly masks pixels in horizontal and vertical stripes; (3) Irregular masks pixels using lines with random lengths and angles. Finally, all images are scaled to range $[0, 1]$ and iid Gaussian noise with standard variance $\sigma=0.05$ is added for denoising.

\paragraph{Baselines}

We compare with Bayesian CP \citep[BCP,][]{zhao2015bayesian}, Bayesian Tensor Ring \citep[BTR,][]{long2021bayesian}, Hierarchical Low-Rank Tensor Factorization \citep[HLRTF,][]{hlrtf}, TD with Deep network priors \citep[DeepTensor,][]{deeptensor}, TD with a tensor Correlated Total Variation regularization \citep[tCTV,][]{tctv}, and TD with Global-LOcal-Nonlocal priors \citep[GLON,][]{9667278}. BCP and BTR are two traditional Bayesian TD for CP and TR decompositions. HLRTF and DeepTensor are two deep learning-based TD methods. GLON is a TD method that leverages pre-trained neural networks as priors. However, GLON only applies for pre-trained denoising networks with fixed noise levels.

\paragraph{Results}

We evaluate reconstruction quality using three complementary metrics: peak signal-to-noise ratio (PSNR) measuring pixel-wise accuracy, structural similarity index (SSIM) capturing perceptual similarity, and learned perceptual image patch similarity \citep[LPIPS,][]{zhang2018unreasonable} quantifying semantic fidelity. Higher PSNR/SSIM and lower LPIPS indicate better performance.

The quantitative results in \cref{tab:ffhq_denoising} demonstrate that DiffBCP consistently outperforms all baselines across different datasets and missing patterns. Specifically, on FFHQ with Uniform(0.9) mask, DiffBCP achieves 28.28 dB PSNR, improving over the best baseline (DeepTensor) by 2.17 dB. The gains are even more pronounced on Irregular masks (+2.33 dB) where structured priors matter most. On ImageNet, DiffBCP maintains strong performance on all metrics, with average improvements of 1.82 dB in PSNR and 12.42 in SSIM over the best competing method.

The qualitative comparisons in \cref{fig:denoising_results} reveal that DiffBCP reconstructs sharper textures and more coherent structures. Traditional multi-linear methods (BCP and BTR) produce overly coarse results due to limited prior expressiveness, \eg, we can easily identify the low-rank structures.
Non-linear decompositions (HLRTF and DeepTensor) generate finer-grained details but suffer from artifacts. GLON exhibits severe instability, particularly at high missing ratios (90\%+), often converging to trivial solutions (predicting all missing pixels as zero); this explains its poor quantitative performance in \cref{tab:ffhq_denoising}. In contrast, DiffBCP leverages the powerful diffusion prior while maintaining stable inference through the split Gibbs sampler.

We ablate the diffusion prior and the CUSP shrinkage prior in \cref{app-sec:ablation}, and report sensitivity to the scheduling hyperparameters $c$ and $\rho_{\min}$ in \cref{app-sec:sensitivity}. The two ablations show that (i) the diffusion prior is the main source of gain on structured masks, and (ii) CUSP enables strong recovery even when the initial CP rank is severely under-specified, so the model is not sensitive to the choice of initial rank.

\begin{table}[t]
    \centering
    \caption{Results for high-resolution images. PSNR$\uparrow$ and SSIM$\uparrow$ metrics are reported, while SSIM is multiplied by 100 for better readability.}\label{tab:ood_completion}
    \resizebox{0.98\linewidth}{!}{%
    \begin{tabular}{lcccccc}
        \toprule
        \multicolumn{1}{l}{} & \multicolumn{2}{c}{\textbf{Uni(0.9)}} & \multicolumn{2}{c}{\textbf{Uni(0.95)}} & \multicolumn{2}{c}{\textbf{Irr}} \\
        \cmidrule(lr){2-3} \cmidrule(lr){4-5} \cmidrule(lr){6-7}
        \multicolumn{1}{l}{\textbf{Method}} & \textbf{PSNR} & \textbf{SSIM} & \textbf{PSNR} & \textbf{SSIM} & \textbf{PSNR} & \textbf{SSIM} \\
        \midrule
        \rowcolor{headerLight}\multicolumn{7}{c}{\emph{Marseille}} \\
        \midrule
        BCP & 16.94 & 29.42 & 15.89 & 26.55 & 17.63 & 33.55 \\
        PuTT & 19.63 & 36.47 & 17.86 & 28.26 & 19.83 & 40.84 \\
        DiffBCP & \bftab 20.15 & \bftab 39.08 & \bftab 18.01 & \bftab 28.42 & \bftab 20.94 & \bftab 46.75 \\
        \midrule
        \rowcolor{headerLight}\multicolumn{7}{c}{\emph{Tokyo}} \\
        \midrule
        BCP & 17.89 & 37.61 & 16.90 & \bftab 36.67 & 18.62 & 40.40 \\
        PuTT & 19.98 & 39.52 & 18.33 & 30.74 & 20.27 & 45.03 \\
        DiffBCP & \bftab 20.66 & \bftab 44.90 & \bftab 18.90 & 35.50 & \bftab 21.36 & \bftab 51.38 \\
        \midrule
        \rowcolor{headerLight}\multicolumn{7}{c}{\emph{Westerlund}} \\
        \midrule
        BCP & 21.12 & 65.60 & 20.08 & 65.09 & 22.26 & 68.34 \\
        PuTT & \bftab 24.43 & 65.68 & \bftab 23.00 & 55.70 & 24.38 & 68.21 \\
        DiffBCP & 23.85 & \bftab 68.33 & 21.86 & \bftab 66.16 & \bftab 25.27 & \bftab 74.03 \\
        \bottomrule
        \bottomrule
    \end{tabular}
    }
\end{table}
\begin{figure}[t]
    \centering
    \begin{subfigure}[b]{0.19\linewidth}
        \centering
        \includegraphics[width=\textwidth]{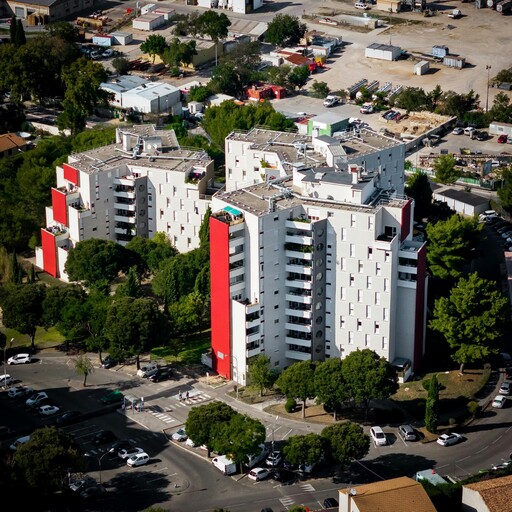}
        \includegraphics[width=\textwidth]{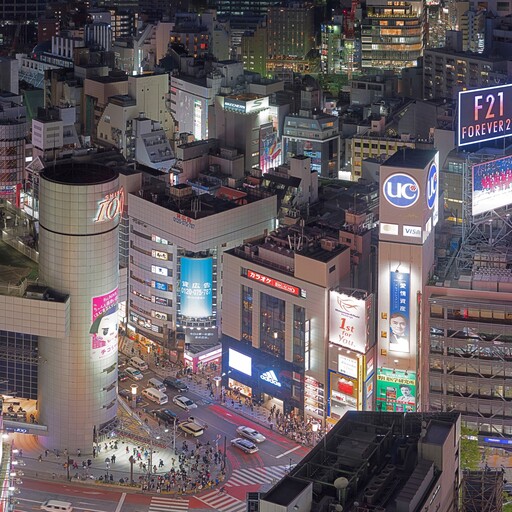}
        \includegraphics[width=\textwidth]{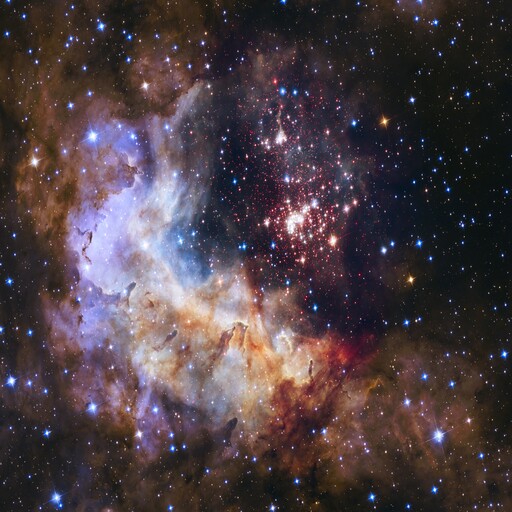}
        \caption*{Original}
    \end{subfigure}
    \begin{subfigure}[b]{0.19\linewidth}
        \centering
        \includegraphics[width=\textwidth]{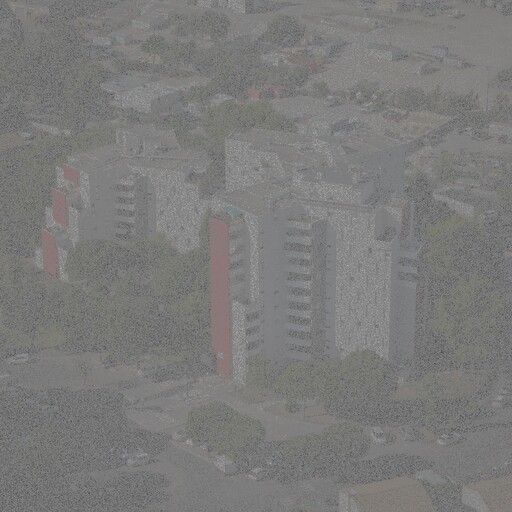}
        \includegraphics[width=\textwidth]{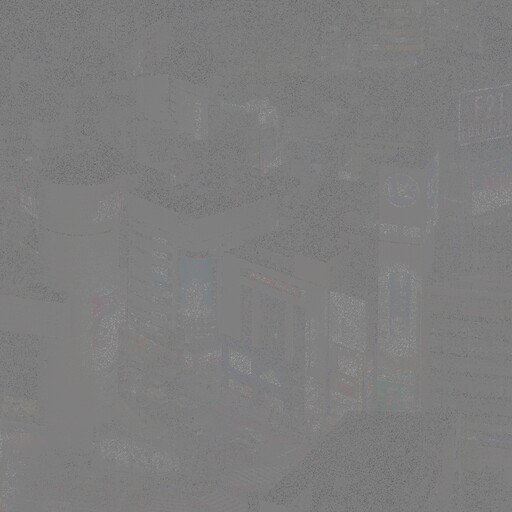}
        \includegraphics[width=\textwidth]{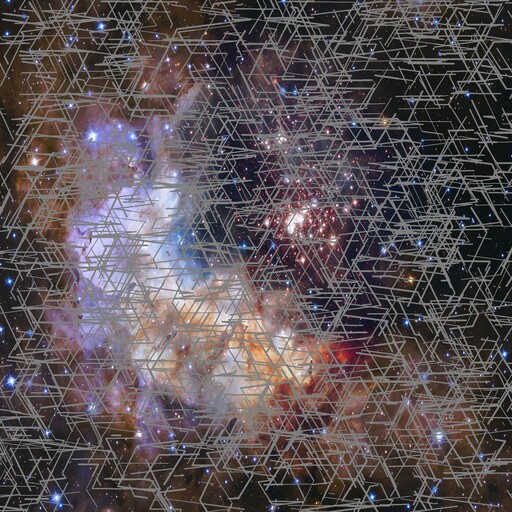}
        \caption*{Observe}
    \end{subfigure}
    \begin{subfigure}[b]{0.19\linewidth}
        \centering
        \includegraphics[width=\textwidth]{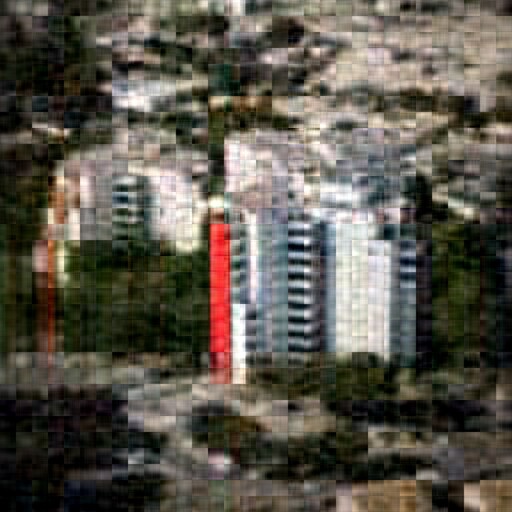}
        \includegraphics[width=\textwidth]{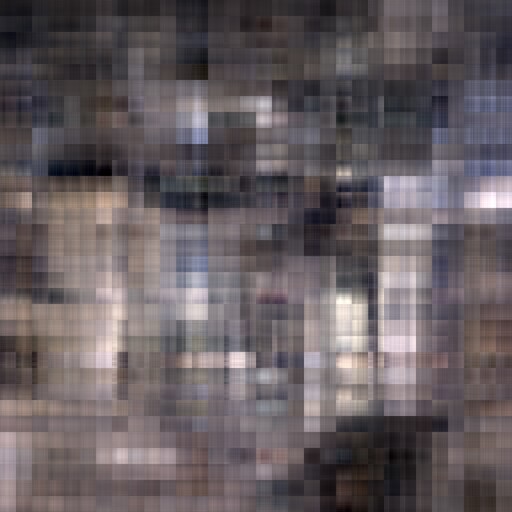}
        \includegraphics[width=\textwidth]{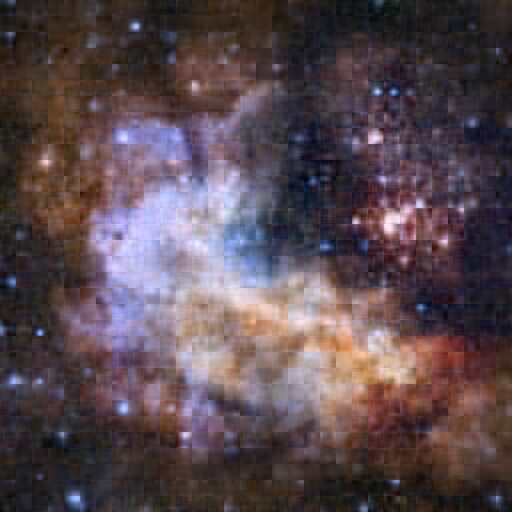}
        \caption*{BCP}
    \end{subfigure}
    \begin{subfigure}[b]{0.19\linewidth}
        \centering
        \includegraphics[width=\textwidth]{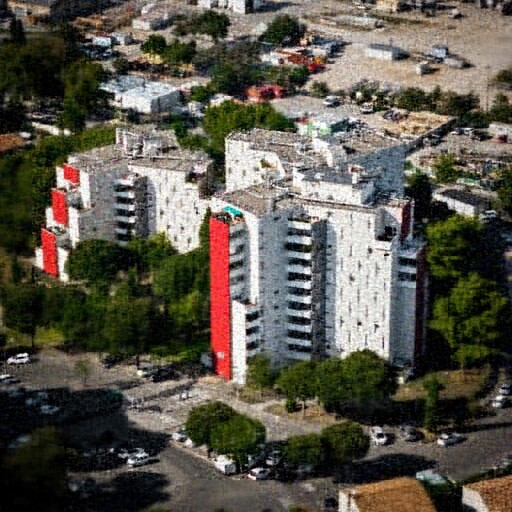}
        \includegraphics[width=\textwidth]{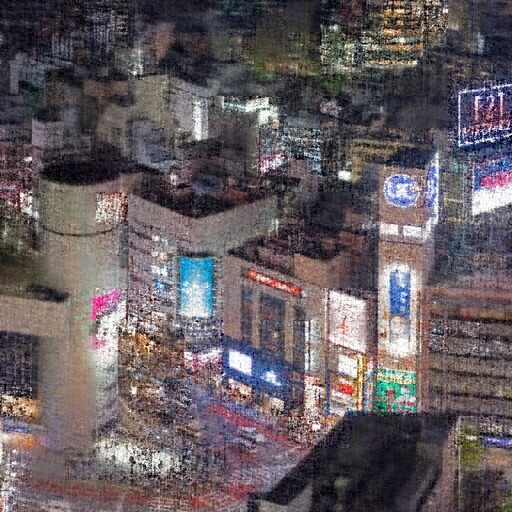}
        \includegraphics[width=\textwidth]{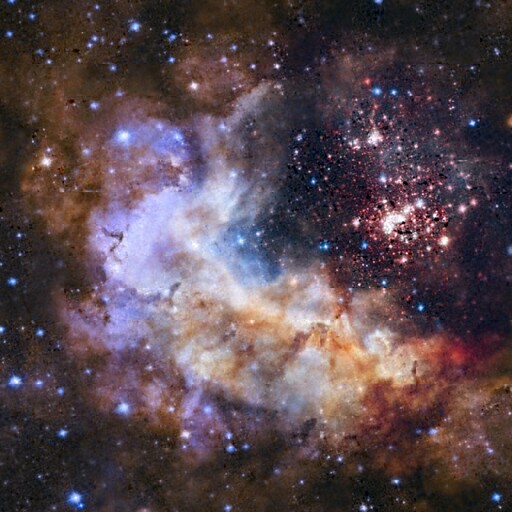}
        \caption*{PuTT}
    \end{subfigure}
    \begin{subfigure}[b]{0.19\linewidth}
        \centering
        \includegraphics[width=\textwidth]{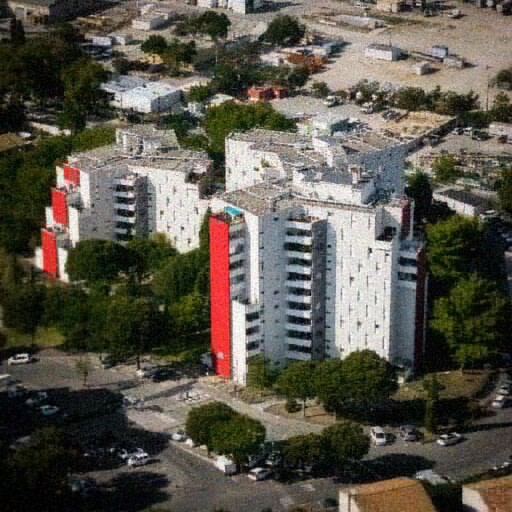}
        \includegraphics[width=\textwidth]{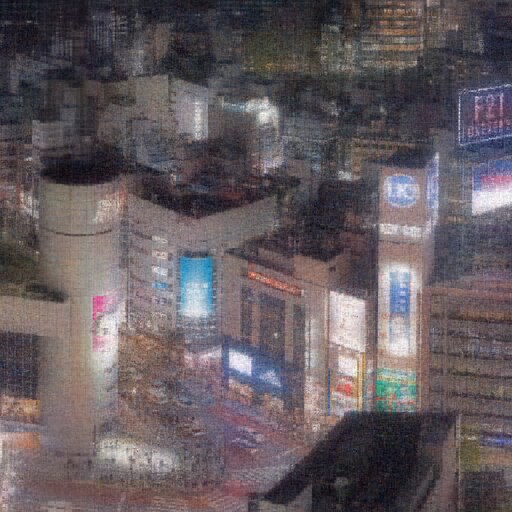}
        \includegraphics[width=\textwidth]{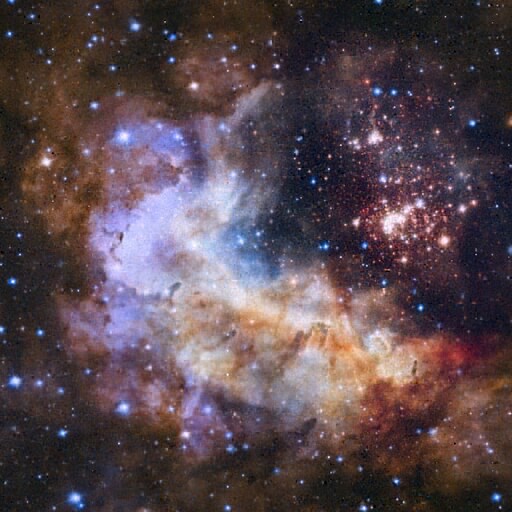}
        \caption*{DiffBCP}
    \end{subfigure}
    \caption{Visualization for high-resolution images. From top to bottom: Marseille with Uniform(0.9), Tokyo with Uniform(0.95), and Westerlund with Irregular masks.}\label{fig:ood_results}
    \vspace{-1em}
\end{figure}

\subsection{Out-of-Distribution and High-Resolution Images}

A critical challenge for practical deployment is handling images from distributions different from the diffusion prior's training data, as well as varying resolutions. To assess generalization capability, we evaluate on out-of-distribution (OOD) high-resolution images.

We select three diverse $2048\times 2048$ images: Marseille (urban landscape), Tokyo (nighttime cityscape), and Westerlund (astronomical image).
See appendix \cref{app-sec:experiments} for their sources.
These images are deliberately chosen to be OOD relative to the FFHQ/ImageNet training distributions. The images also test multi-scale reasoning since the diffusion prior was trained on $256\times 256$ images. We compare with PuTT \citep{loeschcke2024coarsetofine}, a recent coarse-to-fine method achieving state-of-the-art results on high-dimensional tensors. For DiffBCP, we apply the reshape operation in \cref{eq:phi} to match the prior's expected input dimensions while preserving spatial structure.

Results in \cref{tab:ood_completion} show that DiffBCP maintains strong performance even on OOD high-resolution data. On Marseille with Irregular mask, DiffBCP achieves 20.94 dB PSNR versus 19.83 dB for PuTT, despite the significant distribution shift. Qualitative results (\cref{fig:ood_results}) reveal that DiffBCP better preserves fine details and global coherence. PuTT's coarse-to-fine strategy can introduce boundary artifacts when transitioning between scales, whereas our joint low-rank and diffusion modeling maintains consistency across resolutions. The relatively smaller gaps compared to in-distribution results suggest that the low-rank structure provides an inductive bias that partially compensates for distribution mismatch.

\subsection{Improving Diffusion Posterior Sampling}\label{sec:improving-dps}

Finally, we investigate how the low-rank structure in DiffBCP can improve diffusion posterior sampling, especially comparing with PnP-DM \citep{wu2024principled} which shares a similar split Gibbs sampler framework but without tensor decomposition.
A direct quantitative comparison with DPS \citep{chung2023diffusion} and PnP-DM under both single-sample and posterior-mean evaluation is deferred to \cref{app-sec:dps_compare}.

\begin{figure}[t]
    \centering
    \includegraphics[width=0.98\linewidth]{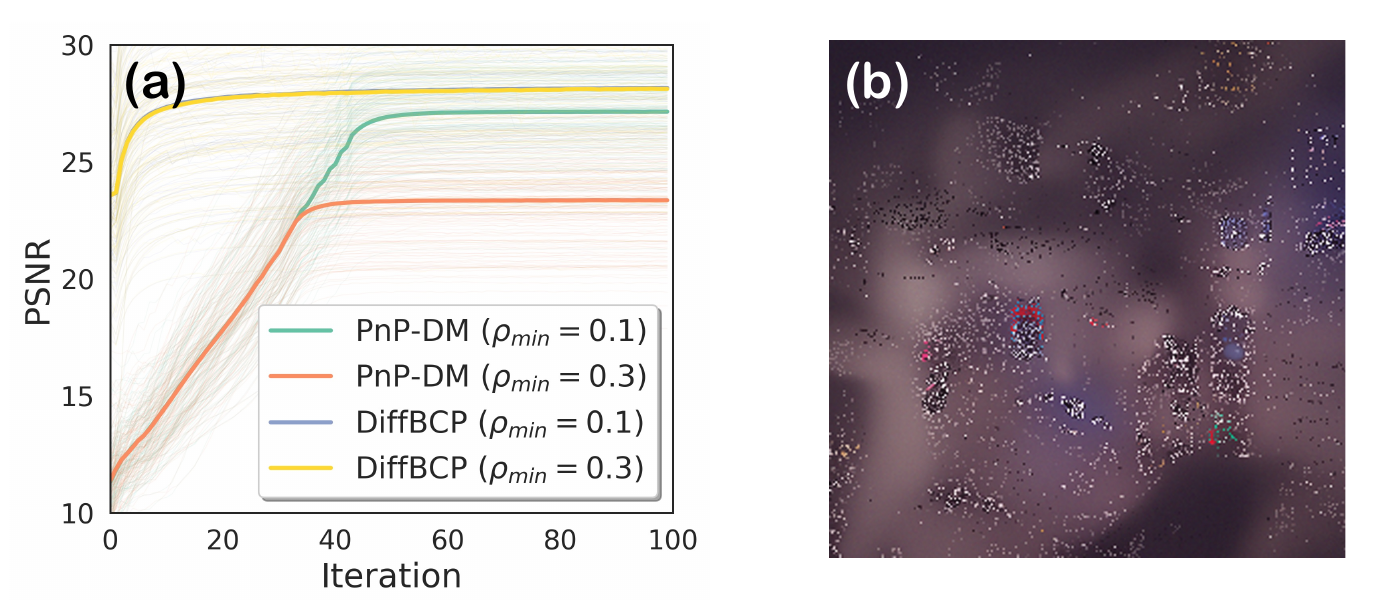}
    \caption{(a) Trace plot of PSNR for ImageNet with Uniform(0.7) mask. Thinner lines correspond to individual images and thicker lines show the average. (b) PnP-DM recovery of Tokyo with Uniform(0.95) mask.}\label{fig:dm}
    \vspace{-1em}
\end{figure}

\paragraph{Faster mixing}

As shown in \cref{fig:dm}(a), the low-rank assumption leads to an easier posterior distribution, which may lead to faster mixing \citep{10.1111/1467-9868.00070}. However, we should note that faster mixing does not necessarily lead to better performance, because the low-rank assumption may introduce bias if the data are not exactly low-rank.

\paragraph{Robust to noise level and scheduling}

Following \citet{wu2024principled}, we clip $\rho$ to $[\rho_{min}, \rho_{max}]$ in each sampling step.
As shown in \cref{fig:dm}(a), PnP-DM could be sensitive to the scheduling of noise level (\eg, $\rho_{min}$), while our noise-adaptive coupling schedule tends to be more robust.
More results are shown in \cref{app-sec:sensitivity}.

\paragraph{High-resolution images}

As shown in \cref{fig:dm}(b), PnP-DM fails to recover the high-resolution images.
As comparison, our method produces much better results as shown in \cref{fig:ood_results}.
The corresponding wall-clock time and peak GPU memory are reported in \cref{app-sec:efficiency}.

\section{Conclusion and Discussion}

We propose DiffBCP, a principled Bayesian framework for tensor decomposition that integrates pre-trained diffusion models as implicit priors. By combining the structural inductive bias of low-rank CP decomposition with the expressive power of diffusion priors, DiffBCP achieves state-of-the-art performance on tensor completion and denoising tasks. Key contributions include: (1) a cumulative shrinkage prior for automatic rank determination, (2) a split Gibbs sampler enabling tractable posterior inference despite the complex coupling between likelihood, low-rank constraint, and diffusion prior, and (3) an adaptive noise scheduling strategy that removes hyperparameter sensitivity.
The performance of DiffBCP rely on the underlying signal admitting a meaningful low-rank structure: when this assumption is violated, the structural contribution of the CP block vanishes while its factor updates remain comparatively more expensive than a pure diffusion sampler.

For future research, several directions merit further investigation. First, while we focus on tensor completion and denoising, the framework naturally extends to other linear inverse problems (e.g., compressed sensing, super-resolution). Second, the current implementation processes the full tensor in Steps \cref{eq:post_lambda,eq:post_a}; stochastic mini-batch updates could reduce memory overhead for extremely large tensors. Third, we employ CP decomposition in this work, but the framework is compatible with other decompositions that may better capture specific structural patterns; we sketch the extension to tensor train and tensor ring in \cref{app-sec:tt_tr}. We hope this work inspires future research at the intersection of tensor decomposition and modern generative models.

\section*{Acknowledgement}

We thank the anonymous reviewers and AC for their constructive feedback. ZT was supported by the JSPS KAKENHI Grant Number JP26K21321 and RIKEN Incentive Research Project. QZ was supported by the JSPS KAKENHI Grant Number JP23K28109 and JSPS Bilateral Program Number JPJSBP120257420.

\section*{Impact Statement}
This paper presents work whose goal is to advance the field of machine learning, especially tensor decomposition and diffusion models. There are many potential societal consequences of our work, none of which we feel must be specifically highlighted here.


\bibliography{example_paper}
\bibliographystyle{icml2026}


\newpage
\appendix
\onecolumn
\section{Derivation of the Gibbs Sampler}\label{sec:gibbs_derivation}

The augmented joint distribution is,
\begin{align*}
    &\exp(- \ell(\tY; \tX, \tau) - g(\tZ) - \phi(\tZ, \tX; \rho) - f_{\lambda}(\vlambda \mid \vtheta) - f_{\vtheta}(\vtheta) - f_{\mA}(\mA) - f_\tau(\tau)), \\
    &s.t. \quad \tX = \compactcp{A}.
\end{align*}
To enable Gibbs sampling of the spike-and-slab prior, we further introduce the augmented variable $\vzeta = (\zeta_1, \ldots, \zeta_R)$, following $p(\zeta_r = l) = \omega_l, \forall l = 1, \dots, R$.
This augmentation variable indicate which slab the component $r$ belongs to,
\begin{equation*}
    \theta_{r} \mid \zeta_r \sim \indicator(\zeta_r > r) \mathrm{InvGamma}(\alpha_\theta, \beta_\theta) + \indicator(\zeta_r \leq r) \delta_{\theta_\infty}, \qquad \forall r = 1, \dots, R,
\end{equation*}
where $\indicator(\cdot)$ is the indicator function.
After the augmentation, the joint distribution becomes,
\begin{align*}
    &\exp(- \ell(\tY; \tX, \tau) - g(\tZ) - \phi(\tZ, \tX; \rho) - f_{\lambda}(\vlambda \mid \vtheta) - f_{\vtheta}(\vtheta \mid \vzeta) - f_{\vzeta}(\vzeta \mid \ve{\nu}) - f_{\ve{\nu}}(\ve{\nu}) - f_{\mA}(\mA) - f_\tau(\tau)), \\
    &s.t. \quad \tX = \compactcp{A},
\end{align*}
where the likelihood is defined as,
\begin{equation*}
    \ell(\tY; \tX, \tau) = \frac{\tau}{2} \norm{ \tO \ast (\tY - \tX) }^2_F,
\end{equation*}
and the priors are defined as,
\begin{align*}
    \phi(\tZ, \tX; \rho) &= \frac{1}{2 \rho^2} \norm{ \tZ - \tX }^2_F, \\
    f_{\vlambda}(\vlambda \mid \vtheta) &= \frac{1}{2} \sum_{r=1}^R (\theta_r)^{-1} \lambda_r^2, \\
    f_{\vtheta}(\vtheta \mid \vzeta) &= \sum_{r=1}^R \mathrm{InvGamma}(\alpha_\theta, \beta_\theta)^{\indicator(\zeta_r > r)} \delta_{\theta_\infty}^{\indicator(\zeta_r \leq r)}, \\
    f_{\vzeta}(\vzeta \mid \ve{\nu}) &= \mathrm{Categorical}(\ve{\omega}), \\
    f_{\ve{\nu}}(\ve{\nu}) &= \prod_{r=1}^R \mathrm{Beta}(\nu_r; 1, \beta), \\
    f_{\mA}(\mA) &= \frac{1}{2} \sum_{n=1}^N \norm{\mA^{(n)}}_F^2, \\
    f_{\tau}(\tau) &= (1 - \alpha_0) \log \tau + \kappa_0 \tau.
\end{align*}

\subsection{Latent Variable Steps}

\paragraph{Sample $\tau$}
When sampling $\tau$, it is not dependent on $\phi$, thus we have,
\begin{align*}
    \tau \mid \cdots &\propto \exp(-f(\tY; \tX, \tau) - l(\tau)) \\
    &\propto \exp\left(-\frac{\tau}{2} \sum_{\idxi \in \Omega} (y_{\idxi} - x_{\idxi})^2\right) \cdot \exp(-(1 - \alpha_0) \log \tau - \kappa_0 \tau) \\
    &\propto \exp\left(-\left(\kappa_0 + \frac{1}{2} \sum_{\idxi \in \Omega} (y_{\idxi} - x_{\idxi})^2\right) \tau\right) \cdot \exp\left(-(1 - \alpha_0 + \frac{|\Omega|}{2}) \log \tau\right).
\end{align*}
This is a Gamma distribution,
\begin{equation*}
    \tau \mid \dots \sim \mathrm{Gamma}\left( \alpha_0 + \frac{|\Omega|}{2}, \kappa_0 + \frac{1}{2} \sum_{\idxi \in \Omega} (y_{\idxi} - x_{\idxi})^2 \right).
\end{equation*}

\paragraph{Sample $\mA$}
The factors $\mA$ are dependent on $f, h, \phi$.
For CP decomposition, we have,
\begin{equation*}
    \mX_{[n]} = \mA^{(n)} \mat{\Lambda} \left( \mA^{(N)} \odot \cdots \odot \mA^{(n+1)} \odot \mA^{(n-1)} \odot \cdots \odot \mA^{(1)} \right)^{\T},
\end{equation*}
where $\mat{\Lambda} = \diag(\vlambda)$ and $\odot$ is the Khatri-Rao product. For simplicity, we denote,
\begin{equation*}
    \mB^{(n)} = \mat{\Lambda} \left( \mA^{(N)} \odot \cdots \odot \mA^{(n+1)} \odot \mA^{(n-1)} \odot \cdots \odot \mA^{(1)} \right)^{\T}.
\end{equation*}
The joint distribution becomes,
\begin{align*}
    p(\mA^{(n)} \mid \cdots) &\propto \exp\left(-\frac{\tau}{2} \norm{ \tO \ast (\tY - \tX) }^2_F - \frac{1}{2} \norm{\mA^{(n)}}_F^2 - \frac{1}{2 \rho^2} \norm{ \tZ - \tX }^2_F\right) \\
    &\propto \exp\left(-\frac{\tau}{2} \norm{ \mO_{[n]} \ast (\mY_{[n]} - \mA^{(n)} \mB^{(n)}) }^2_F - \frac{1}{2} \norm{\mA^{(n)}}_F^2 - \frac{1}{2 \rho^2} \norm{ \mZ_{[n]} - \mA^{(n)} \mB^{(n)} }^2_F\right).
\end{align*}
For $i = 1, \dots, I_N$, we have
\begin{align*}
    p(\ve{a}^{(n)}_{i:} | \cdots) &\propto \exp\left( -\frac{\tau}{2} \norm{ \vo_{[n], i:} \ast (\vy_{[n], i:} - \va^{(n)}_{i:} \mB^{(n)}) }^2 - \frac{1}{2} \norm{\va^{(n)}_{i:}}^2 - \frac{1}{2 \rho^2} \norm{ \ve{z}_{[n], i:} - \va^{(n)}_{i:} \mB^{(n)} }^2 \right) \\
    &\propto \exp\left( -\frac{1}{2} \va^{(n)}_{i:} [\mI + \tau (\mB^{(n)} \diag(\vo_{[n],i:}) \mB^{(n)\T}) + \frac{1}{\rho^2} (\mB^{(n)} \mB^{(n)\T})] \va^{(n)\T}_{i:} + \right. \\
    &\qquad \qquad \qquad \qquad \qquad \qquad \qquad \left. \va^{(n)}_{i:} (\tau \ve{y}_{[n], i:} \diag(\ve{o}_{[n], i:}) \mB^{(n)\T} + \frac{1}{\rho^2} \ve{z}_{[n], i:} \mB^{(n)\T}) \right).
\end{align*}
This is a Gaussian distribution,
\begin{equation*}
    \ve{a}^{(n)}_{i} | \cdots \sim \gN(\ve{\mu}^{(n)}_{i}, \mat{\Sigma}^{(n)}),
\end{equation*}
where,
\begin{align*}
    \mat{\Sigma}^{(n)} &= \left( \mI + \tau (\mB^{(n)} \diag(\ve{o}_{[n], i:}) \mB^{(n)\T}) + \frac{1}{\rho^2} (\mB^{(n)} \mB^{(n)\T}) \right)^{-1}, \\
    \ve{\mu}^{(n)}_{i} &= \mat{\Sigma}^{(n)} \left( \tau \ve{y}_{[n], i:} \diag(\ve{o}_{[n], i:}) \mB^{(n)\T} + \frac{1}{\rho^2} \ve{z}_{[n], i:} \mB^{(n)\T} \right).
\end{align*}
We can easily sample from this Gaussian distribution.

\paragraph{Sample $\vlambda$}

We express the CP decomposition as,
\begin{align*}
    \tX &= \sum_{r=1}^R \lambda_{r} \va^{(1)}_{:r} \circ \va^{(2)}_{:r} \cdots \circ \va^{(N)}_{:r} \\
    &= \lambda_{r} \va^{(1)}_{:r} \circ \va^{(2)}_{:r} \cdots \circ \va^{(N)}_{:r} + \sum_{h=1, h\neq r}^R \lambda_{h} \va^{(1)}_{:h} \circ \va^{(2)}_{:h} \cdots \circ \va^{(N)}_{:h}, \\
    &= \lambda_{r} \tC^{r} + \tD^{r},
\end{align*}
where we denote $\tC^{r} = \va^{(1)}_{:r} \circ \va^{(2)}_{:r} \cdots \circ \va^{(N)}_{:r}$ and $\tD^{r} = \sum_{h=1, h\neq r}^R \lambda_{h} \tC^{h}$. For each component $r$, we have,
\begin{align*}
    p(\lambda_{r} \mid \cdots) &\propto \exp \left(- \frac{1}{2} \theta_r \lambda_r^2 -\frac{\tau}{2} \norm{\tO \ast (\tY - \lambda_r \tC^r - \sum_{r=1, h \neq r} \lambda_h \tC^h)}^2_F  - \frac{1}{2 \rho^2} \norm{\tZ - \lambda_r \tC^r - \sum_{r=1, h \neq r} \lambda_h \tC^h}^2_F \right) \\
    &\propto \exp \left(- \frac{1}{2} \left[ \theta_r + \tau \sum_{\idxi \in \Omega} c_{\idxi}^2 + \frac{1}{\rho^2}\sum_{\idxi} c_{\idxi}^2 \right]\lambda_r^2 + \left[ \tau \sum_{\idxi \in \Omega} c_{\idxi} (y_{\idxi} - d_{\idxi}) + \frac{1}{\rho^2} \sum_{\idxi} c_{\idxi} (z_{\idxi} - d_{\idxi}) \right] \lambda_r \right).
\end{align*}
This is a Gaussian distribution,
\begin{equation*}
    \lambda_r | \cdots \sim \gN(\mu_r, \sigma_r),
\end{equation*}
where,
\begin{align*}
    \sigma_r &= \left( \theta_r + \tau\norm{\tO \ast \tC^r}^2_F + \frac{1}{\rho^2} \norm{\tC^r}^2_F \right)^{-1}, \\
    \mu_r &= \sigma_r \mathsf{Sum}\left( \tau \tO \ast \tC^r \ast (\tY - \tD^r) + \frac{1}{\rho^2} \tC^r \ast (\tZ - \tD^r) \right),
\end{align*}
with $\mathsf{Sum}(\cdot)$ the sum of all elements.

\paragraph{Sample $\theta$}
For each component $r$, the full conditional distribution of $\theta_r$ is,
\begin{align*}
    p(\theta_r \mid \cdots) &\propto \exp(-\frac{1}{2} \theta_r \lambda_r^2) \cdot \mathrm{InvGamma}(\theta_r; \alpha_\theta, \beta_\theta)^{\indicator(\zeta_r > r)} \cdot \delta_{\theta_\infty}^{\indicator(\zeta_r \leq r)} \\
    &\propto \theta_r^{-(\alpha_\theta + 1)} \exp\left(-\left(\beta_\theta + \frac{\lambda_r^2}{2}\right) \theta_r^{-1}\right)^{\indicator(\zeta_r > r)} \cdot \delta_{\theta_\infty}^{\indicator(\zeta_r \leq r)}.
\end{align*}
Thus, we have,
\begin{equation*}
    \theta_r \mid \cdots \sim \mathrm{InvGamma}\left(\alpha_\theta + \frac{1}{2}, \beta_\theta + \frac{\lambda_r^2}{2}\right)^{\indicator(\zeta_r > r)} \cdot \delta_{\theta_\infty}^{\indicator(\zeta_r \leq r)}.
\end{equation*}

\paragraph{Sample $\zeta$}
To sample $\zeta_r$, we can marginalize out $\theta_r$. If $\zeta_r \leq r$, we have,
\begin{align*}
    p(\zeta_r = l \mid \cdots) &\propto \int p(\zeta_r = l) \cdot p(\theta_r \mid \zeta_r) \cdot p(\lambda_r \mid \theta_r) \dd \theta_r \\
    &\propto \int \omega_l \delta_{\theta_{\infty}} \mathrm{Normal}(\lambda_r; 0, \theta_r) \dd \theta_r \\
    &= \omega_l \mathrm{Normal}(\lambda_r; 0, \theta_\infty).
\end{align*}
Else, if $\zeta_r > r$, we have,
\begin{align*}
    p(\zeta_r = l \mid \cdots) &\propto \int p(\zeta_r = l) \cdot p(\theta_r \mid \zeta_r) \cdot p(\lambda_r \mid \theta_r) \dd \theta_r \\
    &\propto \int \omega_l \mathrm{InvGamma}(\theta_r; \alpha_\theta, \beta_\theta) \cdot \mathrm{Normal}(\lambda_r; 0, \theta_r) \dd \theta_r \\
    &= \omega_l \cdot \StudentT_{2\alpha_\theta}(\lambda_r; 0, \beta_\theta / \alpha_\theta).
\end{align*}

\paragraph{Sample $\ve{\nu}$}
From $r = 1$ to $R-1$, we have,
\begin{align*}
    p(\nu_r \mid \cdots) &\propto p(\nu_r) \cdot p(\zeta \mid \nu_r) \\
    &\propto \mathrm{Beta}(\nu_r; 1, \beta) \cdot \prod_{h=r}^R \omega_{\zeta_h} \\
    &\propto \nu_r^{\sum_{h=r}^R \indicator(\zeta_h = r)} (1 - \nu_r)^{\beta - 1 + \sum_{h=r+1}^R \indicator(\zeta_h > r)}.
\end{align*}
This is a Beta distribution,
\begin{equation*}
    \nu_r \mid \cdots \sim \mathrm{Beta}\left(1 + \sum_{h=1}^R \indicator(\zeta_h = r), \beta + \sum_{h=1}^R \indicator(\zeta_h > r)\right).
\end{equation*}

\paragraph{Rank adaptation}

We follow \citet{rai2014scalable,bhattacharya2011sparse} to adaptively adjust the number of components in $\vlambda$ (the CP rank).
At time step $i$, we adapt the rank with probability $p(i) = \exp(\alpha_0 + \alpha_1 i)$, where $\alpha_0$ and $\alpha_1$ are hyperparameters. If there are inactive factors, we remove them. If all factors are active, we add a new factor (and associated parameters) from the priors. Details are shown in \cref{alg:rank}.

\begin{algorithm2e}[t]
    \caption{Rank adaptation $\mathsf{adapt\_factor}(\vlambda, \mA)$}\label{alg:rank}
    \KwIn{CP factors $\vlambda, \mA_1, \dots, \mA_D$, hyperparameters $\alpha_0, \alpha_1$, current iteration $i$.}
    \If{$\mathsf{rand}() \leq \exp(\alpha_0 + \alpha_1 i)$}{
    \tcp{Adapt with probability $\exp(\alpha_0 + \alpha_1 i)$}
    \uIf{$\min(\lVert \lambda_r \rVert) < 10^{-4}$}{
        Remove all components $r$ with $|\lambda_r| < 10^{-4}$\;
    }
    \Else{
        Add a new component with $\lambda_{\mathrm{new}} \sim \gN(0, 1)$ and $\ve{a}^{(d)}_{\mathrm{new}} \sim \gN(\ve{0}, \mI), \forall d = 1, \dots, D$\;
    }}
\end{algorithm2e}

\section{Proof of Theorems}

\subsection{Proof of \cref{thm:residual-bound}}\label{app-sec:bcp_proof}
\begin{proof}
    In the BCP with CUSP prior, since $\lambda_r$ and $a^{(n)}_{i_n r}$ are independent, and the means are zero, we have,
    \begin{align}
        \bE[\lambda_r \prod_{n=1}^N a^{(n)}_{i_n r}]^2 &= \bE\lambda_r^2 \prod_{n=1}^N \bE[a^{(n)}_{i_n r}]^2 \nonumber \\
        &\leq \bE\lambda_r^2 \prod_{n=1}^N \eta_n, \label{eq:residual-bound-inequality}
    \end{align}
    where $\eta_n$ bounds $\bE[a^{(n)}_{i_n r}]^2$. For the first term, we have
    \begin{align*}
        \bE \lambda_r^2 &= \bE[\bE\lambda_r^2 \mid \theta_r] \\
        &= \bE[1 - \pi_r] \frac{\beta_\theta}{\alpha_\theta - 1} + \bE[\pi_r] \theta_{\infty}.
    \end{align*}
    For the expectation of $\pi_r$, we have,
    \begin{align*}
        \bE [\pi_r] &= \sum_{l=1}^r \bE[\omega_l] \\
        &= \sum_{l=1}^r \bE[\nu_l] \prod_{m=1}^{l-1} (1 - \bE[\nu_m]) \\
        &= \sum_{l=1}^r \frac{1}{1 + \beta} \left( \frac{\beta}{1 + \beta} \right)^{l-1} \\
        &= 1 - \left( \frac{\beta}{1 + \beta} \right)^r.
    \end{align*}
    Thus, we have,
    \begin{equation}\label{eq:residual-bound-lambda}
        \bE \lambda_r^2 = \left( \frac{\beta}{1 + \beta} \right)^r \left( \frac{\beta_\theta}{\alpha_\theta - 1} - \theta_{\infty} \right) + \theta_{\infty}.
    \end{equation}
    Injecting \cref{eq:residual-bound-lambda} into \cref{eq:residual-bound-inequality}, we get,
    \begin{equation*}
        \bE[\lambda_r \prod_{n=1}^N a^{(n)}_{i_n r}]^2 \leq \left[ \left( \frac{\beta}{1 + \beta} \right)^r \left( \frac{\beta_\theta}{\alpha_\theta - 1} - \theta_{\infty} \right) + \theta_{\infty} \right] \prod_{n=1}^N \eta_n.
    \end{equation*}
    Using the Markov inequality, we have,
    \begin{align*}
        p((x_{r, \idxi})^2 \geq \epsilon) &\leq \frac{\bE[(x_{r, \idxi})^2]}{\epsilon} \\
        &\leq \frac{1}{\epsilon} \left[ \left( \frac{\beta}{1 + \beta} \right)^r \left( \frac{\beta_\theta}{\alpha_\theta - 1} - \theta_{\infty} \right) + \theta_{\infty} \right] \prod_{n=1}^N \eta_n,
    \end{align*}
    which concludes the proof.
\end{proof}

\subsection{Proof of \cref{thm:stationary_bias}}
\label{app-sec:gibbs_proof}

\begin{proof}

First, by Bayes' rule, for any $(\tZ, \Theta)$ with $\pi_\rho(\Theta\mid \tZ)>0$,
\[
\pi_{\rho,\tZ}(\tZ)
=
\pi_{\rho,\tX}(\Theta)\,
\frac{\pi_\rho(\tZ\mid \Theta)}{\pi_\rho(\Theta\mid \tZ)}.
\]
Integrating both sides over $\tZ$ yields
$
1=\pi_{\rho,\tX}(\Theta)\int \frac{\pi_\rho(\tZ\mid\Theta)}{\pi_\rho(\Theta\mid \tZ)}\,d\tZ,
$
hence
\begin{equation}
\pi_{\rho,\tX}(\Theta)
=
\left(
\int \frac{\pi_\rho(\tZ\mid\Theta)}{\pi_\rho(\Theta\mid \tZ)}\,d\tZ
\right)^{-1}.
\label{eq:theta_marginal_true}
\end{equation}
Similarly, using the compatibility assumption, $\tilde\pi_\rho(\Theta\mid \tZ)=\pi_\rho(\Theta\mid \tZ)$
and $\tilde\pi_\rho(\tZ\mid\Theta)=q_\rho(\tZ\mid\Theta)$ imply
\[
\tilde\pi_{\rho,\tZ}(\tZ)
=
\tilde\pi_{\rho,\tX}(\Theta)\,
\frac{q_\rho(\tZ\mid \Theta)}{\pi_\rho(\Theta\mid \tZ)}.
\]
Integrating over $\tZ$ yields
\begin{equation}
\tilde\pi_{\rho,\tX}(\Theta)
=
\left(
\int \frac{q_\rho(\tZ\mid\Theta)}{\pi_\rho(\Theta\mid \tZ)}\,d\tZ
\right)^{-1}.
\label{eq:theta_marginal_approx}
\end{equation}
Subtracting \eqref{eq:theta_marginal_true} and \eqref{eq:theta_marginal_approx} in reciprocal form gives
\[
\frac{\tilde\pi_{\rho,\tX}(\Theta)-\pi_{\rho,\tX}(\Theta)}
{\pi_{\rho,\tX}(\Theta)\tilde\pi_{\rho,\tX}(\Theta)}
=
\int
\frac{\pi_\rho(\tZ\mid\Theta)-q_\rho(\tZ\mid\Theta)}{\pi_\rho(\Theta\mid \tZ)}\,d\tZ.
\]
Using again $\pi_{\rho,\tZ}(\tZ)=\pi_{\rho,\tX}(\Theta)\frac{\pi_\rho(\tZ\mid \Theta)}{\pi_\rho(\Theta\mid \tZ)}$,
we obtain
\[
\frac{\tilde\pi_{\rho,\tX}(\Theta)-\pi_{\rho,\tX}(\Theta)}
{\tilde\pi_{\rho,\tX}(\Theta)}
=
1-
\int \frac{q_\rho(\tZ\mid\Theta)}{\pi_\rho(\tZ\mid\Theta)}\,\pi_{\rho,\tZ}(\tZ)\,d\tZ,
\]
which rearranges,
\begin{equation*}
\frac{\pi_{\rho,\tX}(\Theta)}{\tilde\pi_{\rho,\tX}(\Theta)}
=
\int \frac{q_\rho(\tZ\mid \Theta)}{\pi_\rho(\tZ\mid \Theta)}\,\pi_{\rho,\tZ}(\tZ)\,d\tZ.
\end{equation*}
Taking logarithm and expectation w.r.t. $\Theta\sim \pi_{\rho,\tX}$ yields \eqref{eq:kl_bias_theta}.
Pinsker's inequality concludes the TV bound.
\end{proof}

\subsection{Proof of \cref{cor:end_to_end_decomposition}}

\begin{proof}
By the triangle inequality,
\[
\big|\mathbb{E}_{\tilde\pi_{\rho,\tX}}[f]-\mathbb{E}_{\pi_{0,\tX}}[f]\big|
\le
\big|\mathbb{E}_{\tilde\pi_{\rho,\tX}}[f]-\mathbb{E}_{\pi_{\rho,\tX}}[f]\big|
+
\big|\mathbb{E}_{\pi_{\rho,\tX}}[f]-\mathbb{E}_{\pi_{0,\tX}}[f]\big|.
\]
For the first term, by the characterization of the total variation distance
$|\mathbb{E}_{P}[f]-\mathbb{E}_{Q}[f]|\le 2\|f\|_\infty \TV(P\,\|\,Q)$ for bounded measurable $f$,
we have,
\[\big|\mathbb{E}_{\tilde\pi_{\rho,\tX}}[f]-\mathbb{E}_{\pi_{\rho,\tX}}[f]\big|
\le 2\|f\|_\infty\,\TV\left(\tilde\pi_{\rho,\tX}\,\|\,\pi_{\rho,\tX}\right).\]
Using the Pinsker's inequality $\TV(P\,\|\,Q)\le \sqrt{\frac12\mathrm{KL}(P\,\|\,Q)}$, we have,
\begin{align*}
\big|\mathbb{E}_{\tilde\pi_{\rho,\tX}}[f]-\mathbb{E}_{\pi_{0,\tX}}[f]\big|
&\le \big|\mathbb{E}_{\pi_{\rho,\tX}}[f]-\mathbb{E}_{\pi_{0,\tX}}[f]\big| + 2\|f\|_\infty\,\TV\left(\tilde\pi_{\rho,\tX}\,\|\,\pi_{\rho,\tX}\right) \\
&\le \big|\mathbb{E}_{\pi_{\rho,\tX}}[f]-\mathbb{E}_{\pi_{0,\tX}}[f]\big| + \|f\|_\infty\, \sqrt{2\KL\left(\tilde\pi_{\rho,\tX}\,\|\,\pi_{\rho,\tX}\right)}.
\end{align*}

\end{proof}

\section{Generalization to Tensor Train and Tensor Ring Decompositions}\label{app-sec:tt_tr}

The DiffBCP framework is largely agnostic to the specific low-rank decomposition. Replacing CP with other decompositions such as TT \citep{oseledets2011tensor} or TR \citep{zhao2016tensor} only requires changing the contraction map $\tX(\Theta)$ that connects latent factors to the reconstructed tensor; the likelihood $p(\tY \mid \tX, \tau)$, the augmented variable block on $\tZ$, and the overall split Gibbs framework remain unchanged.

Concretely, for an $N$-th order tensor $\tX \in \bR^{I_1 \times \cdots \times I_N}$, a TT decomposition writes each entry as
\begin{equation*}
    x_{\idxi} = \tG^{(1)}_{i_1} \tG^{(2)}_{i_2} \cdots \tG^{(N)}_{i_N},
\end{equation*}
where $\tG^{(n)} \in \bR^{R_{n-1} \times I_n \times R_n}$ are the TT cores with $R_0 = R_N = 1$; TR relaxes the boundary to $R_0 = R_N$ and replaces the matrix product with a trace. A key observation is that, conditional on all other cores, each core $\tG^{(n)}$ enters the reconstruction \emph{linearly}. Placing an isotropic Gaussian prior on the entries of each core therefore preserves Gaussian conjugacy for its full conditional given $\tY$, $\tZ$, $\tau$, and the other cores, yielding the same matrix-normal update form as the CP factor update in \cref{eq:post_a}, with $\mB^{(n)}$ replaced by the appropriate contraction of the remaining cores. The CUSP rank-shrinkage prior generalizes analogously: it can be placed on the slices of $\tG^{(n)}$ along the rank dimensions $R_{n-1}, R_n$ to shrink redundant TT/TR ranks, and the Gibbs updates of the CUSP auxiliaries remain unchanged because they depend only on the prior structure, not on the specific form of $\tX(\Theta)$.

The diffusion-prior block in \cref{eq:post_z} is independent of the choice of decomposition, since it only depends on $\tX$ as a noisy observation of $\tZ$ via the coupling $\phi(\tZ, \tX; \rho)$. Consequently, the theoretical results in \cref{thm:stationary_bias,cor:end_to_end_decomposition} also hold verbatim, as they are stated in terms of the augmented posterior $\pi_\rho(\tZ, \Theta)$ without any CP-specific structure. A full empirical study of DiffBCP for TT/TR decompositions is left to future work.

\section{Experiments}\label{app-sec:experiments}

\subsection{Image Inpainting and Denoising}\label{app-sec:inpainting_denoising}

\paragraph{Implementations}
For baseline models, we use their official implementations.
BCP, BTR, tCTV, and GLON are implemented in MATLAB. We run them on a server with Intel Xeon Silver 4316 CPU@2.30GHz CPU and 512 GB RAM.
HLRTF, DeepTensor, and our method are implemented in PyTorch.

\paragraph{Hyperparameters}

Bayesian tensor decomposition including BCP and BTR uses fully Bayesian inference, so they do not require extensive hyperparameter tuning. We choose CP rank 100 and TR rank 30, which are sufficiently large for the $256 \times 256 \times 3$ images. For other baselines, we tune their hyperparameters for the best performance on the validation set.

For our method, we re-arrange images into patches of shape $16\times 16 \times 3$ with strite $8$, so the resulting image tensor has shape $16 \times 16 \times 3 \times 961$. We set the CP rank as 200 and adaptively tune the rank during sampling.
The hyperparameters for the CUSP prior are set as follows,
\[
\alpha_0 = \num{1e-3}, \, \kappa_0 = \num{1e-3}, \, \beta = 5.0, \, \alpha_\theta = 2.0, \, \beta_\theta = 2.0, \, \theta_\infty = \num{1e-3}.
\]
Most of these hyperparameters are noninformative and we do not tune them. The constant for the coupling parameter is set as $c = 100.0$, and we further clip $\rho$ into $[0.3, 10.0]$ following \citet{wu2024principled}. We study the sensitivity to both $c$ and $\rho_{\min}$ in \cref{app-sec:sensitivity}. For the Gibbs sampler, we run 100 iterations in total with 40 burn-in iterations. We collect one sample every four iterations after burn-in.

\paragraph{Evaluation}

We report averaged value of peak signal-to-noise ratio (PSNR), structural similarity index (SSIM), and learned perceptual image patch similarity (LPIPS). PSNR and SSIM metrics are computed using the implementation from the \texttt{MONAI} library \citep{cardoso2022monai} with default parameters.
LPIPS is computed using the implementation in the VQGAN project\footnote{\url{https://github.com/CompVis/taming-transformers}}.

\subsection{Comparison with Diffusion Posterior Samplers}\label{app-sec:dps_compare}

We further compare DiffBCP against representative diffusion posterior samplers, including DPS \citep{chung2023diffusion} and PnP-DM \citep{wu2024principled}, on the FFHQ inpainting benchmark. We use the same pre-trained diffusion checkpoint and the same masks as in \cref{app-sec:inpainting_denoising}, so that all three methods share an identical observation model and prior network.

For PnP-DM and DiffBCP, both of which produce posterior samples via a split Gibbs framework, we report two evaluation modes. The \emph{single-sample} reconstruction uses a single posterior sample as the reconstruction, which directly reflects the quality of a single draw from the posterior. The \emph{posterior-mean} reconstruction averages the collected samples after burn-in (see \cref{app-sec:inpainting_denoising} for the sampling schedule), and is closer in spirit to a minimum-mean-squared-error estimator. DPS does not produce a posterior distribution, and is therefore evaluated in the single-sample regime only.

\begin{table*}[t]
    \centering
    \caption{Comparison with diffusion posterior samplers (DPS and PnP-DM) on FFHQ inpainting. PSNR$\uparrow$, SSIM$\uparrow$, and LPIPS$\downarrow$ are reported, with SSIM and LPIPS multiplied by 100 for readability. ``Single-sample reconstruction'' denotes the reconstruction from a single posterior sample; ``Posterior-mean reconstruction'' denotes the posterior-mean estimate averaged over the collected samples. DPS does not produce multiple posterior samples and is therefore evaluated in the single-sample regime only. Within each block, bold marks the best value per column.}\label{tab:dps_compare}
    \resizebox{0.98\textwidth}{!}{%
    \begin{tabular}{l|ccc|ccc|ccc|ccc}
        \toprule
        \multicolumn{1}{l}{} & \multicolumn{3}{c}{\textbf{Uniform(0.7)}} & \multicolumn{3}{c}{\textbf{Uniform(0.9)}} & \multicolumn{3}{c}{\textbf{Stripe}} & \multicolumn{3}{c}{\textbf{Irregular}} \\
        \cmidrule(lr){2-4} \cmidrule(lr){5-7} \cmidrule(lr){8-10} \cmidrule(lr){11-13}
        \multicolumn{1}{l}{\textbf{Method}} & \textbf{PSNR} & \textbf{SSIM} & \multicolumn{1}{c}{\textbf{LPIPS}} & \textbf{PSNR} & \textbf{SSIM} & \multicolumn{1}{c}{\textbf{LPIPS}} & \textbf{PSNR} & \textbf{SSIM} & \multicolumn{1}{c}{\textbf{LPIPS}} & \textbf{PSNR} & \textbf{SSIM} & \textbf{LPIPS} \\
        \midrule
        \rowcolor{headerLight}\multicolumn{13}{c}{\emph{Single-sample reconstruction}} \\
        \midrule
        DPS \citep{chung2023diffusion} & 30.75 & \bftab 87.31 & \bftab 15.61 & 26.98 & \bftab 78.81 & \bftab 21.22 & 27.37 & \bftab 85.27 & \bftab 14.84 & \bftab 29.87 & \bftab 88.70 & \bftab 12.67 \\
        PnP-DM \citep{wu2024principled} & 29.45 & 70.87 & 24.65 & 27.21 & 65.30 & 31.91 & 25.82 & 68.20 & 25.42 & 28.04 & 71.04 & 22.79 \\
        \rowcolor{softMint} DiffBCP & \bftab 31.49 & 85.94 & 21.31 & \bftab 27.80 & 77.24 & 33.67 & \bftab 27.51 & 81.36 & 23.93 & 29.84 & 85.53 & 19.71 \\
        \midrule
        \rowcolor{headerLight}\multicolumn{13}{c}{\emph{Posterior-mean reconstruction}} \\
        \midrule
        PnP-DM \citep{wu2024principled} & \bftab 33.20 & \bftab 88.92 & \bftab 17.30 & \bftab 28.41 & \bftab 81.22 & \bftab 25.88 & 27.30 & \bftab 86.10 & \bftab 17.74 & \bftab 30.56 & \bftab 89.38 & \bftab 14.60 \\
        \rowcolor{softMint} DiffBCP & 32.13 & 88.83 & 17.70 & 28.28 & 81.18 & 28.93 & \bftab 27.91 & 85.11 & 19.89 & 30.34 & 88.60 & 15.98 \\
        \bottomrule
        \bottomrule
    \end{tabular}
    }
    \vspace{-1em}
\end{table*}

\cref{tab:dps_compare} summarizes the results. Under \emph{single-sample} evaluation, DiffBCP attains the best PSNR on three of the four masks (Uniform 0.7/0.9 and Stripe) and is essentially tied with DPS on Irregular, while consistently improving over PnP-DM on PSNR and SSIM across all four masks. Compared with DPS, however, DiffBCP trails on SSIM and LPIPS in all four masks. We interpret this as a pixel-vs-perceptual trade-off: the explicit low-rank CP structure regularizes each individual posterior sample and brings it closer to the underlying signal in $\ell_2$ sense, but it also discourages the kind of high-frequency texture that perceptual metrics (SSIM and especially LPIPS) reward. DPS, which collapses to a single MAP-like reconstruction guided by the diffusion prior, is comparatively well aligned with this perceptual axis.

Under \emph{posterior-mean} evaluation, DiffBCP is competitive with PnP-DM across all metrics, with the two methods alternating as the best across the twelve metric/mask cells.

\subsection{Ablation Studies}\label{app-sec:ablation}

We ablate the two main components of DiffBCP --- the diffusion-prior block on $\tZ$ and the CUSP shrinkage prior on the CP factor weights --- on the FFHQ inpainting benchmark from \cref{app-sec:inpainting_denoising}. \cref{tab:ablation} reports posterior-mean reconstruction quality for the Full model, a variant without the diffusion-prior block, and a series of variants where CUSP is replaced by a fixed CP rank. For such case, we replace the CUSP prior with simple Gaussian priors on the CP factors.

\begin{table*}[t]
    \centering
    \caption{Ablation study on FFHQ inpainting. ``Full'' is our default DiffBCP with both the CUSP shrinkage prior and the diffusion-prior block, using an initial CP rank of $R=200$. ``w/o DM'' removes the diffusion-prior block while keeping CUSP. ``rank=$R$, w/ CUSP'' keeps CUSP active but starts from an initial CP rank of $R$. ``rank=$R$'' disables CUSP and fixes the CP rank at $R$. The Full row (highlighted in green) achieves the best value in every column; the other rows are ordered to isolate the contribution of each component.}\label{tab:ablation}
    \resizebox{0.98\textwidth}{!}{%
    \begin{tabular}{l|ccc|ccc|ccc|ccc}
        \toprule
        \multicolumn{1}{l}{} & \multicolumn{3}{c}{\textbf{Uniform(0.7)}} & \multicolumn{3}{c}{\textbf{Uniform(0.9)}} & \multicolumn{3}{c}{\textbf{Stripe}} & \multicolumn{3}{c}{\textbf{Irregular}} \\
        \cmidrule(lr){2-4} \cmidrule(lr){5-7} \cmidrule(lr){8-10} \cmidrule(lr){11-13}
        \multicolumn{1}{l}{\textbf{Method}} & \textbf{PSNR} & \textbf{SSIM} & \multicolumn{1}{c}{\textbf{LPIPS}} & \textbf{PSNR} & \textbf{SSIM} & \multicolumn{1}{c}{\textbf{LPIPS}} & \textbf{PSNR} & \textbf{SSIM} & \multicolumn{1}{c}{\textbf{LPIPS}} & \textbf{PSNR} & \textbf{SSIM} & \textbf{LPIPS} \\
        \midrule
        \rowcolor{softMint} DiffBCP (Full) & \bftab 32.13 & \bftab 88.83 & \bftab 17.70 & \bftab 28.28 & \bftab 81.18 & \bftab 28.93 & \bftab 27.91 & \bftab 85.11 & \bftab 19.89 & \bftab 30.34 & \bftab 88.60 & \bftab 15.98 \\
        \midrule
        \rowcolor{headerLight}\multicolumn{13}{c}{\emph{Ablating the diffusion-prior block}} \\
        \midrule
        DiffBCP (w/o DM) & 32.05 & 88.00 & 18.88 & 28.10 & 79.24 & 32.52 & 25.82 & 79.35 & 28.97 & 28.64 & 85.28 & 21.39 \\
        \midrule
        \rowcolor{headerLight}\multicolumn{13}{c}{\emph{CUSP under rank misspecification (adaptive vs.\ fixed rank)}} \\
        \midrule
        DiffBCP (rank=10, w/ CUSP) & 29.60 & 84.69 & 23.86 & 27.64 & 79.40 & 31.41 & 26.51 & 79.38 & 28.51 & 28.12 & 82.62 & 25.58 \\
        DiffBCP (rank=10) & 26.35 & 75.17 & 37.39 & 25.83 & 73.42 & 40.75 & 23.60 & 68.80 & 44.17 & 25.23 & 72.68 & 40.64 \\
        DiffBCP (rank=100) & 31.79 & 87.96 & 18.99 & 28.09 & 79.20 & 32.28 & 25.56 & 78.42 & 29.76 & 28.33 & 84.44 & 22.36 \\
        DiffBCP (rank=200) & 32.09 & 87.85 & 19.06 & 28.06 & 78.80 & 33.07 & 25.84 & 79.55 & 28.92 & 28.75 & 85.52 & 21.30 \\
        DiffBCP (rank=300) & 32.11 & 87.58 & 19.33 & 28.01 & 78.62 & 33.46 & 25.95 & 79.89 & 28.83 & 28.92 & 85.80 & 21.10 \\
        \bottomrule
        \bottomrule
    \end{tabular}
    }
    \vspace{-1em}
\end{table*}

\paragraph{Contribution of the diffusion-prior block.}
Comparing Full and ``w/o DM'' shows that the diffusion prior contributes only a small PSNR gain on uniformly random masks (e.g., $+0.08$\,dB on Uniform 0.7, $+0.18$\,dB on Uniform 0.9), but a substantially larger gain on structured corruption: $+2.09$\,dB on Stripe and $+1.70$\,dB on Irregular, with the LPIPS gap widening even more (e.g., $19.89$ vs.\ $28.97$ on Stripe). This pattern is consistent with the intuition that uniformly missing pixels are mostly recoverable from low-rank structure alone, whereas stripe and irregular masks remove contiguous spatial structure that the data-driven diffusion prior is uniquely positioned to fill in.

\paragraph{CUSP under rank misspecification.}
The remaining rows isolate the role of CUSP by removing the shrinkage prior and fixing the CP rank. The pure ``rank=10'' variant collapses across all masks, reflecting severe under-parameterization. Activating CUSP on top of the same initial rank of 10 (``rank=10, w/ CUSP'') substantially mitigates this failure. A residual gap to Full remains, which we attribute to the limited Gibbs budget used in the ablation: CUSP introduces and activates new components incrementally, and at $100$ Gibbs iterations a chain that starts at $R=10$ has not yet reached its equilibrium rank --- the corresponding trajectory in \cref{fig:effrank} is still expanding past iteration $100$ and only plateaus around iteration $400$. By contrast, the Full model starts at $R=200$ (close to or above the equilibrium for these masks), so its CP factor pool only needs to be shrunk, which converges within the same $100$-iteration budget. The practical takeaway is that CUSP makes the model \emph{robust} to misspecified initial rank but does not eliminate it as a hyperparameter when the sampling budget is tight; a moderately large initial rank remains preferable, and an under-specified initial rank can be compensated by lengthening the chain. Finally, simply increasing the fixed rank to $100$, $200$, or $300$ without CUSP closes the gap on uniform masks but does \emph{not} reach Full on the structured masks, confirming that the contribution of CUSP is not reducible to simply allocating a larger CP rank.

\subsection{Effective Rank Evolution under CUSP}\label{app-sec:effrank}

\begin{wrapfigure}{r}{0.48\linewidth}
    \vspace{-1em}
    \centering
    \includegraphics[width=0.9\linewidth]{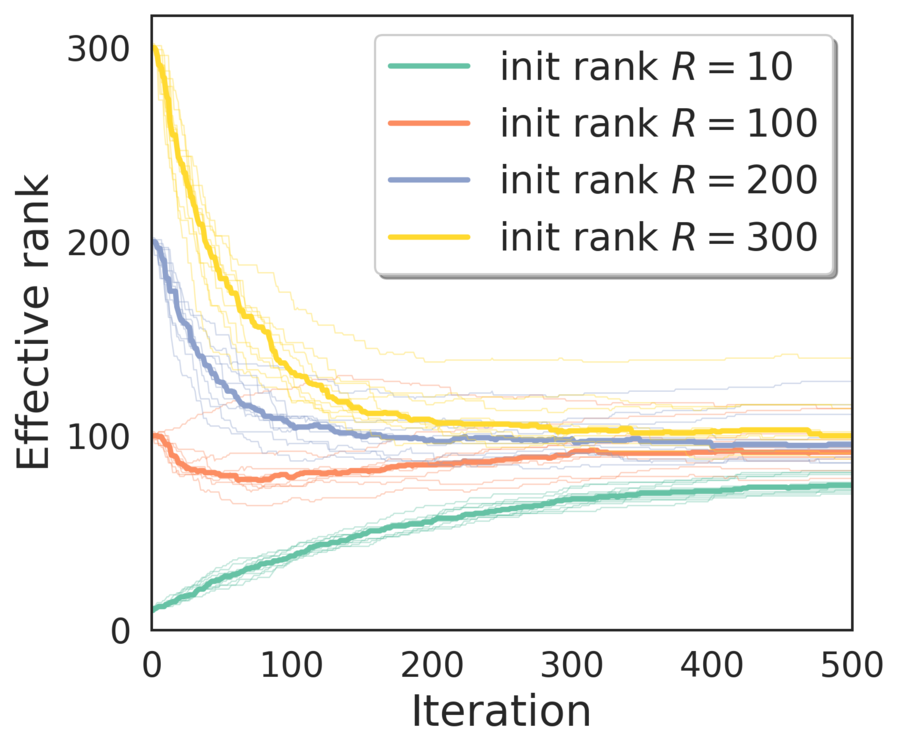}
    \caption{Effective CP rank as a function of Gibbs iteration on FFHQ inpainting with the Uniform(0.9) mask, for four initial ranks $R \in \{10, 100, 200, 300\}$. Thick lines are the median across $10$ images; thin lines are individual images. Regardless of whether the chain starts from an under-specified ($R=10$) or over-specified ($R=300$) initial rank, CUSP drives the effective rank toward a common band of active components.}\label{fig:effrank}
    \vspace{-1em}
\end{wrapfigure}

To complement the static ablation in \cref{app-sec:ablation}, we track how the effective CP rank evolves over the course of the Gibbs sampler when CUSP is enabled. For each setting we run $10$ FFHQ images and record the active component count $|\{r : |\lambda_r| \geq 10^{-4}\}|$ at every iteration (cf.\ \cref{alg:rank}). We extend the chain to $500$ Gibbs iterations in this analysis because rank adaptation under CUSP converges noticeably more slowly than the reconstruction metrics.

\cref{fig:effrank} shows the characteristic behavior on the Uniform(0.9) mask. The chain initialized at $R=10$ monotonically \emph{expands} its effective rank --- CUSP activates additional components in response to residual fitting error --- while the chains initialized at $R \in \{100, 200, 300\}$ all \emph{shrink} their effective rank. By $500$ iterations the four trajectories collapse onto a narrow common band, with the spread reflecting genuine per-image variability in the underlying low-rank structure together with the slow tail of rank adaptation when starting far above equilibrium. This is a direct empirical confirmation that CUSP is \emph{bidirectionally adaptive}: it prunes redundant components when over-specified and expands the active set when under-specified, removing the need to choose the initial CP rank carefully.

\subsection{Hyperparameter Sensitivity}\label{app-sec:sensitivity}

\cref{tab:sensitivity} reports posterior-mean reconstruction quality under perturbations of the coupling constant $c$ and the lower clip $\rho_{\min}$ for DiffBCP, alongside the corresponding sensitivity of PnP-DM \citep{wu2024principled} to its own scheduling and noise hyperparameters.

\begin{table*}[t]
    \centering
    \caption{Sensitivity to scheduling hyperparameters on FFHQ inpainting (posterior-mean reconstruction). For DiffBCP, the defaults are the coupling constant $c=100$ and the lower clip $\rho_{\min}=0.3$; for PnP-DM, the defaults are $\rho_{\min}=0.1$ and the observation noise $\sigma$ set to the ground-truth value $\sigma=0.05$. Default configurations are highlighted in green; all other entries change a single hyperparameter and keep the rest at the default value.}\label{tab:sensitivity}
    \resizebox{0.98\textwidth}{!}{%
    \begin{tabular}{l|ccc|ccc|ccc|ccc}
        \toprule
        \multicolumn{1}{l}{} & \multicolumn{3}{c}{\textbf{Uniform(0.7)}} & \multicolumn{3}{c}{\textbf{Uniform(0.9)}} & \multicolumn{3}{c}{\textbf{Stripe}} & \multicolumn{3}{c}{\textbf{Irregular}} \\
        \cmidrule(lr){2-4} \cmidrule(lr){5-7} \cmidrule(lr){8-10} \cmidrule(lr){11-13}
        \multicolumn{1}{l}{\textbf{Method}} & \textbf{PSNR} & \textbf{SSIM} & \multicolumn{1}{c}{\textbf{LPIPS}} & \textbf{PSNR} & \textbf{SSIM} & \multicolumn{1}{c}{\textbf{LPIPS}} & \textbf{PSNR} & \textbf{SSIM} & \multicolumn{1}{c}{\textbf{LPIPS}} & \textbf{PSNR} & \textbf{SSIM} & \textbf{LPIPS} \\
        \midrule
        \rowcolor{headerLight}\multicolumn{13}{c}{\emph{DiffBCP --- varying coupling constant $c$ and lower clip $\rho_{\min}$}} \\
        \midrule
        \rowcolor{softMint} DiffBCP ($c=100$, $\rho_{\min}=0.3$, default) & 32.13 & 88.83 & 17.70 & 28.28 & 81.18 & 28.93 & 27.91 & 85.11 & 19.89 & 30.34 & 88.60 & 15.98 \\
        DiffBCP ($c=50$) & 31.93 & 88.82 & 17.63 & 27.69 & 80.03 & 29.58 & 27.91 & 85.15 & 19.79 & 30.35 & 88.69 & 15.88 \\
        DiffBCP ($c=30$) & 31.89 & 88.79 & 17.69 & 26.83 & 77.51 & 32.60 & 27.90 & 85.17 & 19.83 & 30.35 & 88.60 & 15.87 \\
        DiffBCP ($\rho_{\min}=0.1$) & 33.04 & 86.80 & 16.22 & 28.31 & 80.90 & 29.64 & 27.71 & 83.92 & 21.61 & 30.01 & 87.66 & 17.39 \\
        \midrule
        \rowcolor{headerLight}\multicolumn{13}{c}{\emph{PnP-DM --- varying lower clip $\rho_{\min}$ and observation noise $\sigma$}} \\
        \midrule
        \rowcolor{softMint} PnP-DM ($\rho_{\min}=0.1$, $\sigma=0.05$, default) & 33.20 & 88.92 & 17.30 & 28.41 & 81.22 & 25.88 & 27.30 & 86.10 & 17.74 & 30.56 & 89.38 & 14.60 \\
        PnP-DM ($\rho_{\min}=0.3$) & 29.48 & 81.72 & 25.80 & 25.44 & 73.50 & 33.37 & 26.20 & 81.34 & 24.15 & 28.89 & 79.05 & 21.30 \\
        PnP-DM ($\sigma=0.025$) & 33.56 & 89.56 & 16.31 & 28.78 & 82.30 & 24.68 & 27.30 & 86.39 & 17.12 & 30.61 & 89.70 & 13.90 \\
        PnP-DM ($\sigma=0.1$) & 32.05 & 86.78 & 20.37 & 27.07 & 77.60 & 29.73 & 30.31 & 88.21 & 16.75 & 27.20 & 84.99 & 19.74 \\
        \bottomrule
        \bottomrule
    \end{tabular}
    }
    \vspace{-1em}
\end{table*}

\paragraph{DiffBCP is stable across $c$ and $\rho_{\min}$.}
Sweeping the coupling constant $c \in \{30, 50, 100\}$ moves Uniform 0.7 PSNR by at most $0.24$\,dB and Stripe/Irregular PSNR by at most $0.05$\,dB. On the most ill-posed mask (Uniform 0.9), $c=30$ degrades by $1.45$\,dB relative to $c=100$, but no setting catastrophically fails. Reducing $\rho_{\min}$ from the default $0.3$ to $0.1$ actually \emph{improves} Uniform 0.7 PSNR by $0.91$\,dB while losing $0.20$\,dB on Stripe and $0.33$\,dB on Irregular --- a small re-balancing rather than a regime change.

\paragraph{PnP-DM is more sensitive to scheduling.}
By contrast, increasing PnP-DM's $\rho_{\min}$ from the default $0.1$ to $0.3$ causes a sharp collapse: PSNR drops by $3.72$\,dB on Uniform 0.7 and by $2.97$\,dB on Uniform 0.9. The noise-adaptive coupling schedule in DiffBCP, in which $\rho$ is automatically set by the inferred precision $\tau$ (\cref{sec:sample_z}), removes the need for delicate manual scheduling of the diffusion noise level.

\paragraph{PnP-DM benefits from carefully tuned observation noise.}
We additionally vary the observation-noise parameter $\sigma$ that PnP-DM passes to its diffusion denoiser. With the ground-truth value $\sigma=0.05$ already used in the default row, doubling to $\sigma=0.1$ degrades all four masks (e.g., $-1.15$\,dB on Uniform 0.7), while halving to $\sigma=0.025$ improves PnP-DM further and matches or surpasses DiffBCP (Full) on most cells (e.g., Uniform 0.7 PSNR $33.56$ vs.\ $32.13$). We report this row openly: it confirms that a well-tuned PnP-DM, with $\sigma$ chosen below the true noise to drive more aggressive denoising, can outperform DiffBCP on uniform masks. The point of this comparison is not to claim universal superiority, but to highlight a different operating mode: DiffBCP automatically infers $\tau$ from the data and couples $\rho$ accordingly, so no analogous hand-tuning of $\sigma$ is required. We view the tuning-free regime as the relevant axis for practical deployment, especially in settings where the true noise level is unknown.

\subsection{Out-of-Distribution and High-Resolution Images}\label{app-sec:ood_highres}

\paragraph{Datasets}

We test our algorithm on out-of-distribution high-resolution images. We choose three images with $2048\times 2048$ resolution, including Marseille\footnote{\url{https://www.pexels.com/photo/aerial-drone-view-of-urban-buildings-from-top-18644280/}}, Tokyo\footnote{\url{https://www.flickr.com/photos/trevor_dobson_inefekt69/29314390837}}, and Westerlund\footnote{\url{https://science.nasa.gov/asset/hubble/westerlund-2/}}.
We generate the masks in the same way with \cref{app-sec:inpainting_denoising}, and use the same iid Gaussian noise with $\sigma = 0.05$ for denoising.

\paragraph{Implementations}

In this experiment, we compare with PuTT \citep{loeschcke2024coarsetofine}, as it shows SOTA performances for such high-dimensional data.
We use the official implementation of PuTT in PyTorch.
Additionally, we compare with BCP \citep{zhao2015bayesian} as a direct comparison to CP decomposition.

For our method, recall that the image size ($2048 \times 2048$) is inconsistent with the training size of the diffusion model ($256 \times 256$). In specific, the reconstructed tensor $\tX$ has shape $2048 \times 2048 \times 3$, while the augmented tensor $\tZ$ has shape $256 \times 256 \times 3$. Here we use bicubic interpolation for resizing.
When compute the coupling distant \cref{eq:phi}, we downsample $\tX$ to $256 \times 256 \times 3$. When sampling CP factors \cref{eq:post_lambda,eq:post_a}, we upsample $\tZ$ back to $2048 \times 2048 \times 3$.

\paragraph{Hyperparameters}

PuTT learns the Quantized Tensor Train (QTT) representations in a coarse-to-fine manner. We set the downsampling resolution as $[64, 128, 256, 512, 1024, 2048]$, which means that the image is downsampled to $64 \times 64$ at the coarsest level and upsampled in sequence. The total number of iterations is set as $2560$, and we upsample the image at $64, 128, 256, 512, 1024$ iterations. The learning rate is $\num{8e-3}$ using Adam optimizer and batch size $262144$. The maximum QTT rank is set as $100$.

For BCP and our method, we reshape the images into $64 \times 32 \times 64 \times 32 \times 3$ tensors. For BCP, we set the initial CP rank as $100$, because it quickly exceeds the maximum memory limit (512 GB) when the rank is larger.
Besides, even with rank 100, we find BCP quickly shrinks the effective rank.
For our method, we set the initial CP rank as $500$. Other hyperparameters are the same as in \cref{app-sec:inpainting_denoising}.
Note that due to the different rank definition of QTT and CP, our CP format still yields larger compression rate than QTT.

\paragraph{Evaluation}

The evaluation is the same as in \cref{app-sec:inpainting_denoising}. We run each experiment with five different random seeds and report the averaged results.

\subsection{Computational Cost}\label{app-sec:efficiency}

\cref{tab:efficiency} reports wall-clock time and peak GPU memory per image on a single A100 GPU, for the FFHQ $256\times256$ setup of \cref{app-sec:inpainting_denoising} and the Marseille $2048\times2048$ setup of \cref{app-sec:ood_highres}.

\begin{table}[t]
    \centering
    \caption{Wall-clock time and peak GPU memory per image, measured on a single A100 GPU. FFHQ uses $256\times256$ images (\cref{app-sec:inpainting_denoising}); Marseille uses $2048\times2048$ images (\cref{app-sec:ood_highres}).}\label{tab:efficiency}
    \begin{tabular}{l|cc|cc}
        \toprule
        \multicolumn{1}{l}{} & \multicolumn{2}{c}{\textbf{FFHQ ($256{\times}256$)}} & \multicolumn{2}{c}{\textbf{Marseille ($2048{\times}2048$)}} \\
        \cmidrule(lr){2-3} \cmidrule(lr){4-5}
        \multicolumn{1}{l}{\textbf{Method}} & \textbf{Time} & \textbf{Peak Mem.} & \textbf{Time} & \textbf{Peak Mem.} \\
        \midrule
        HLRTF & 11.7\,s & 593\,MB & 137\,s & 12.8\,GB \\
        DeepTensor & 27.3\,s & 736\,MB & 456\,s & 35.3\,GB \\
        PnP-DM & 37.4\,s & 744\,MB & 109\,s & 5.3\,GB \\
        \rowcolor{softMint} DiffBCP & 239\,s & 821\,MB & 250\,s & 12.5\,GB \\
        \bottomrule
        \bottomrule
    \end{tabular}
    \vspace{-1em}
\end{table}

DiffBCP is slower than PnP-DM, reflecting the cost of running full-batch CP factor updates.
In the high-resolutional Marseille case, DiffBCP runs in $250$\,s with $12.5$\,GB peak memory, which is $1.8\times$ faster and $2.8\times$ more memory-efficient than the strongest deep tensor baseline DeepTensor ($456$\,s, $35.3$\,GB).

The main bottleneck of DiffBCP is the full-batch Gibbs update on $\mA$ and $\vlambda$. As noted in the conclusion, replacing this with stochastic mini-batch sampling is a natural direction to further reduce runtime and memory, particularly when scaling to even larger tensors.

\subsection{More Visualizations}

\begin{figure*}[h]
    \centering
    \begin{subfigure}[b]{0.105\textwidth}
        \centering
        \includegraphics[width=\textwidth]{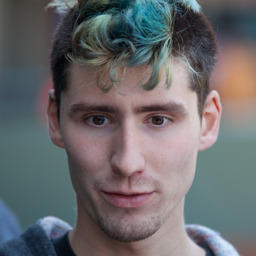}
        \includegraphics[width=\textwidth]{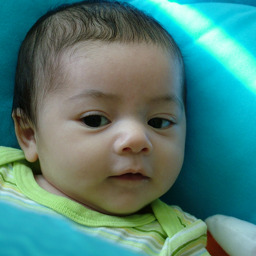}
        \includegraphics[width=\textwidth]{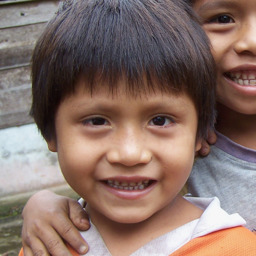}
        \includegraphics[width=\textwidth]{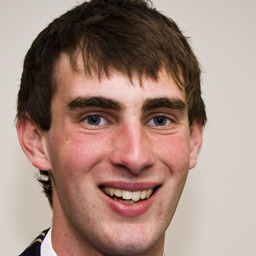}
        \includegraphics[width=\textwidth]{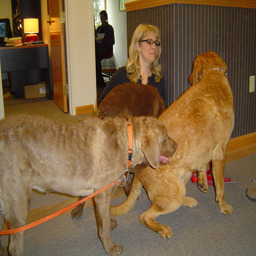}
        \includegraphics[width=\textwidth]{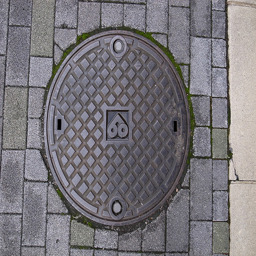}
        \includegraphics[width=\textwidth]{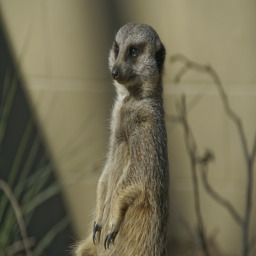}
        \includegraphics[width=\textwidth]{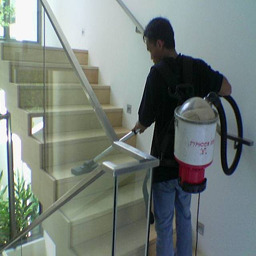}
        \caption*{Original}
    \end{subfigure}
    \begin{subfigure}[b]{0.105\textwidth}
        \centering
        \includegraphics[width=\textwidth]{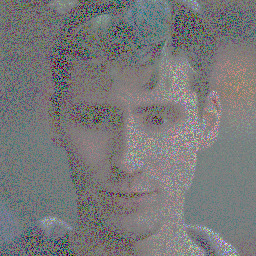}
        \includegraphics[width=\textwidth]{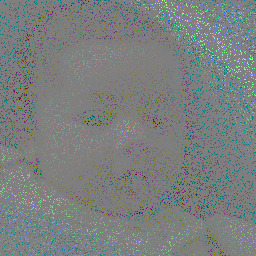}
        \includegraphics[width=\textwidth]{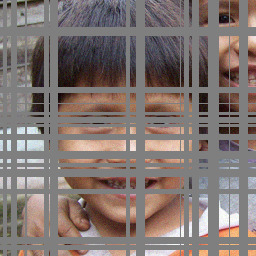}
        \includegraphics[width=\textwidth]{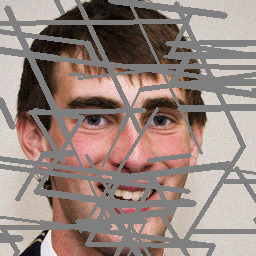}
        \includegraphics[width=\textwidth]{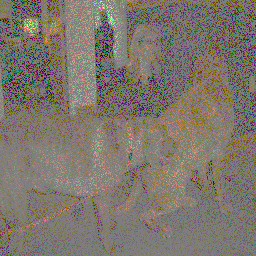}
        \includegraphics[width=\textwidth]{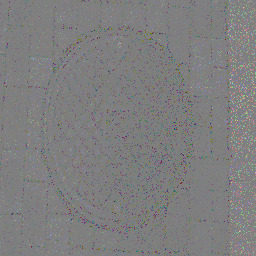}
        \includegraphics[width=\textwidth]{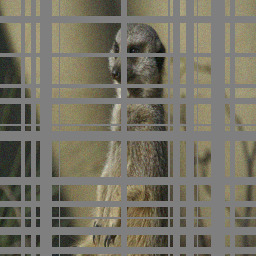}
        \includegraphics[width=\textwidth]{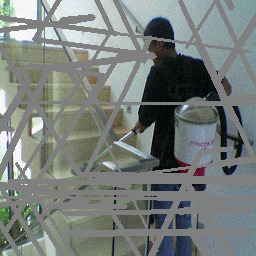}
        \caption*{Observe}
    \end{subfigure}
    \begin{subfigure}[b]{0.105\textwidth}
        \centering
        \includegraphics[width=\textwidth]{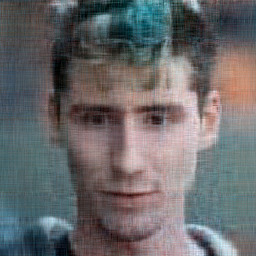}
        \includegraphics[width=\textwidth]{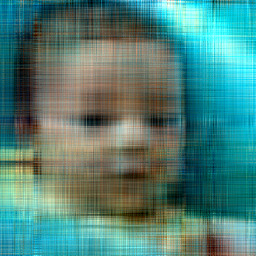}
        \includegraphics[width=\textwidth]{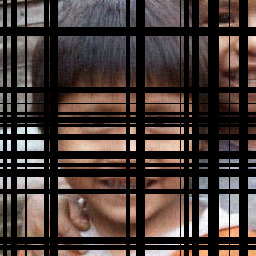}
        \includegraphics[width=\textwidth]{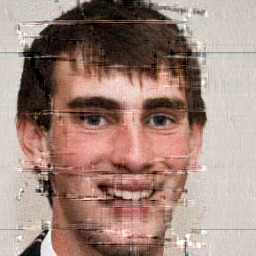}
        \includegraphics[width=\textwidth]{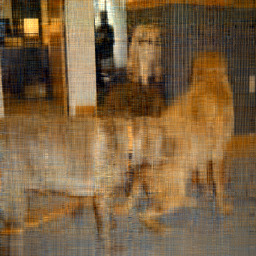}
        \includegraphics[width=\textwidth]{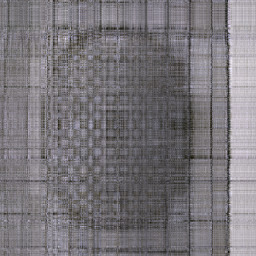}
        \includegraphics[width=\textwidth]{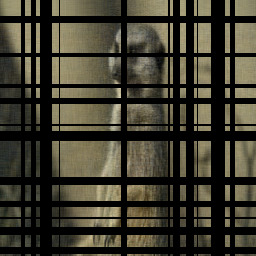}
        \includegraphics[width=\textwidth]{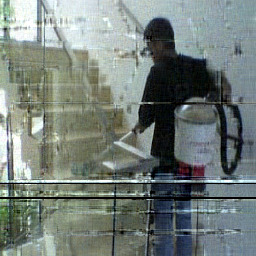}
        \caption*{BCP}
    \end{subfigure}
    \begin{subfigure}[b]{0.105\textwidth}
        \centering
        \includegraphics[width=\textwidth]{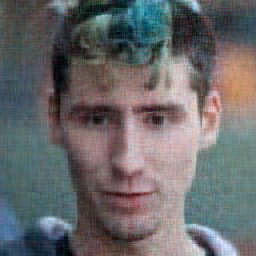}
        \includegraphics[width=\textwidth]{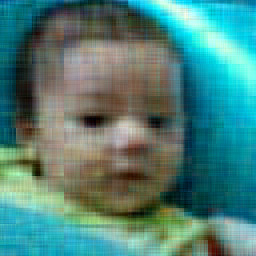}
        \includegraphics[width=\textwidth]{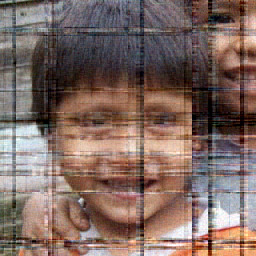}
        \includegraphics[width=\textwidth]{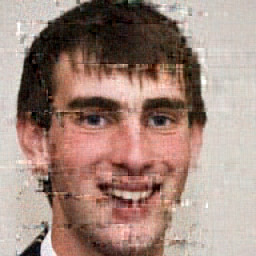}
        \includegraphics[width=\textwidth]{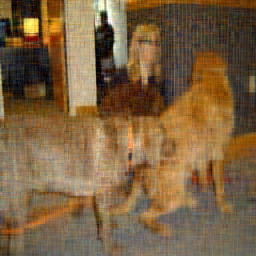}
        \includegraphics[width=\textwidth]{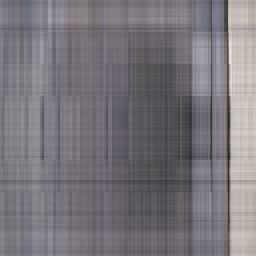}
        \includegraphics[width=\textwidth]{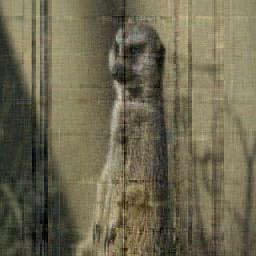}
        \includegraphics[width=\textwidth]{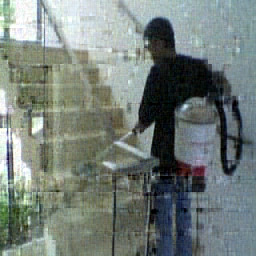}
        \caption*{BTR}
    \end{subfigure}
    \begin{subfigure}[b]{0.105\textwidth}
        \centering
        \includegraphics[width=\textwidth]{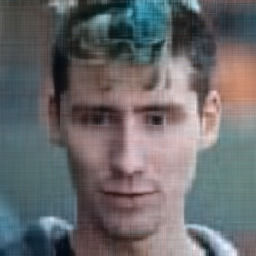}
        \includegraphics[width=\textwidth]{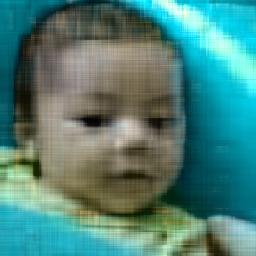}
        \includegraphics[width=\textwidth]{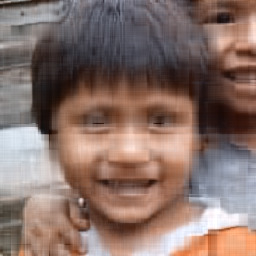}
        \includegraphics[width=\textwidth]{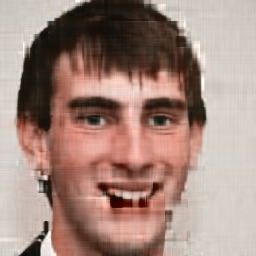}
        \includegraphics[width=\textwidth]{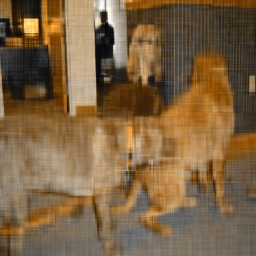}
        \includegraphics[width=\textwidth]{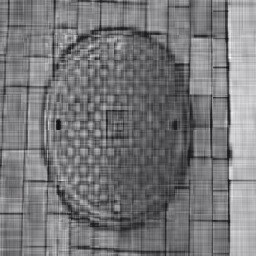}
        \includegraphics[width=\textwidth]{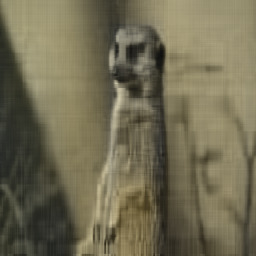}
        \includegraphics[width=\textwidth]{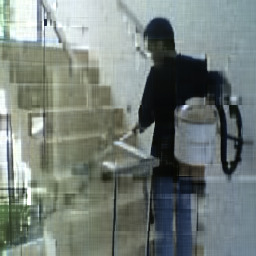}
        \caption*{HLRTF}
    \end{subfigure}
    \begin{subfigure}[b]{0.105\textwidth}
        \centering
        \includegraphics[width=\textwidth]{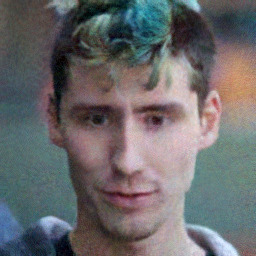}
        \includegraphics[width=\textwidth]{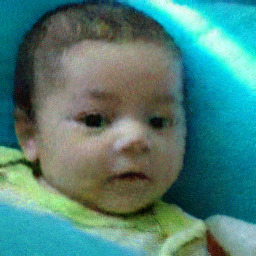}
        \includegraphics[width=\textwidth]{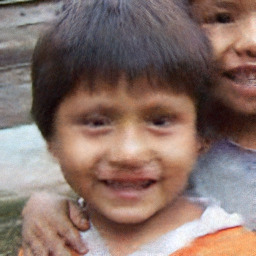}
        \includegraphics[width=\textwidth]{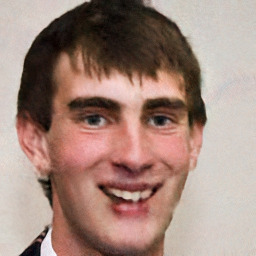}
        \includegraphics[width=\textwidth]{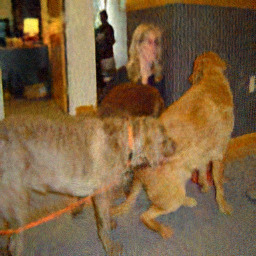}
        \includegraphics[width=\textwidth]{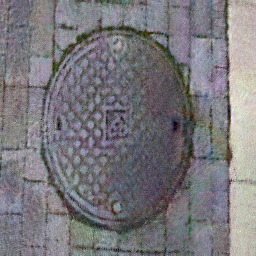}
        \includegraphics[width=\textwidth]{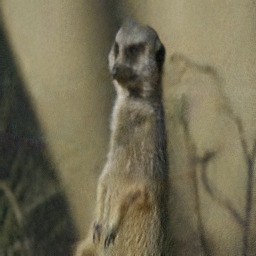}
        \includegraphics[width=\textwidth]{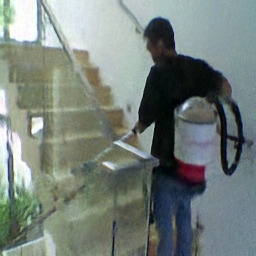}
        \caption*{DeepTensor}
    \end{subfigure}
    \begin{subfigure}[b]{0.105\textwidth}
        \centering
        \includegraphics[width=\textwidth]{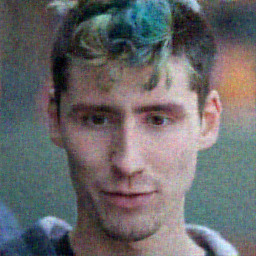}
        \includegraphics[width=\textwidth]{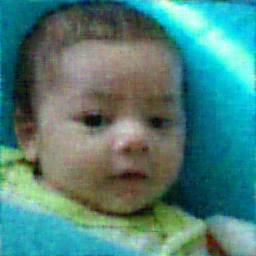}
        \includegraphics[width=\textwidth]{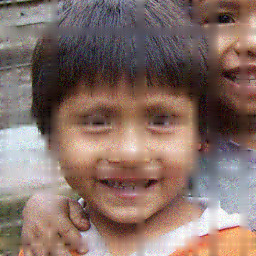}
        \includegraphics[width=\textwidth]{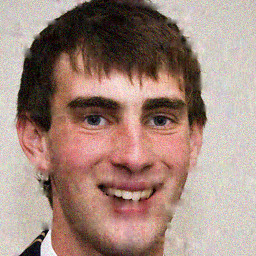}
        \includegraphics[width=\textwidth]{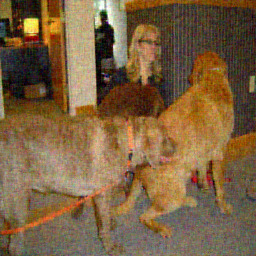}
        \includegraphics[width=\textwidth]{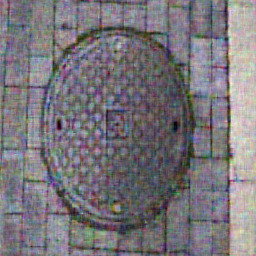}
        \includegraphics[width=\textwidth]{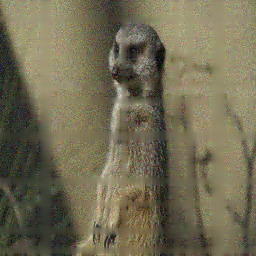}
        \includegraphics[width=\textwidth]{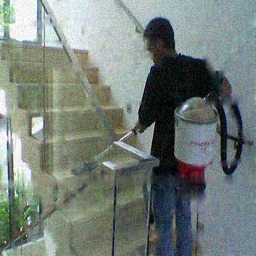}
        \caption*{tCTV}
    \end{subfigure}
    \begin{subfigure}[b]{0.105\textwidth}
        \centering
        \includegraphics[width=\textwidth]{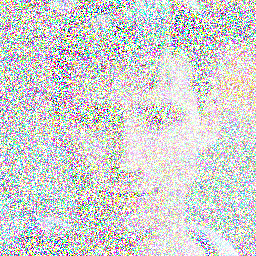}
        \includegraphics[width=\textwidth]{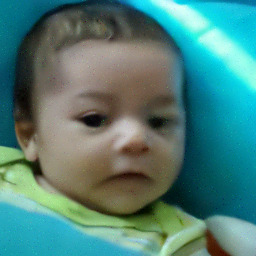}
        \includegraphics[width=\textwidth]{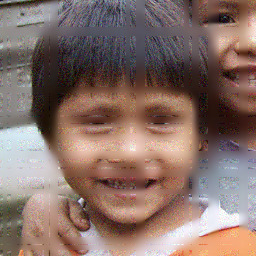}
        \includegraphics[width=\textwidth]{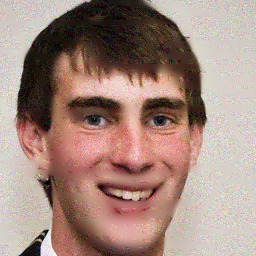}
        \includegraphics[width=\textwidth]{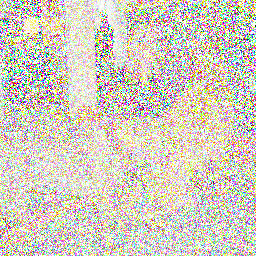}
        \includegraphics[width=\textwidth]{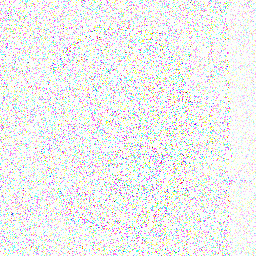}
        \includegraphics[width=\textwidth]{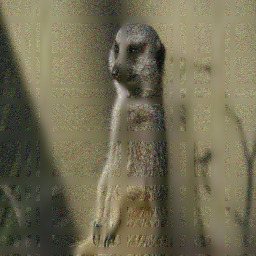}
        \includegraphics[width=\textwidth]{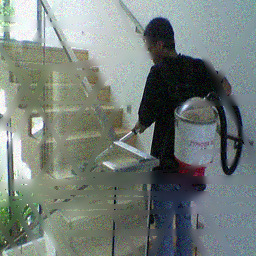}
        \caption*{GLON}
    \end{subfigure}
    \begin{subfigure}[b]{0.105\textwidth}
        \centering
        \includegraphics[width=\textwidth]{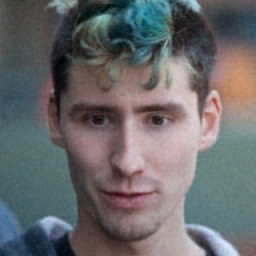}
        \includegraphics[width=\textwidth]{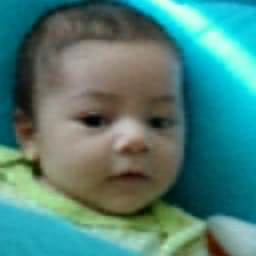}
        \includegraphics[width=\textwidth]{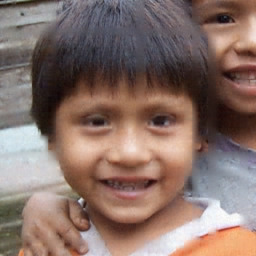}
        \includegraphics[width=\textwidth]{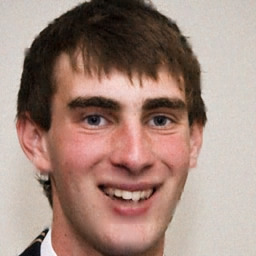}
        \includegraphics[width=\textwidth]{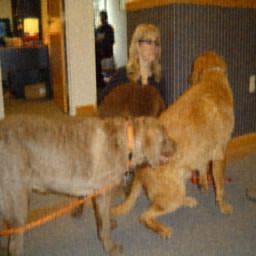}
        \includegraphics[width=\textwidth]{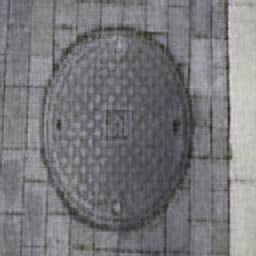}
        \includegraphics[width=\textwidth]{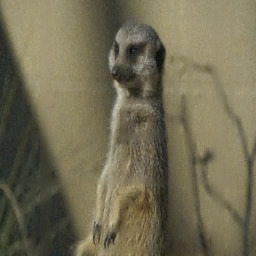}
        \includegraphics[width=\textwidth]{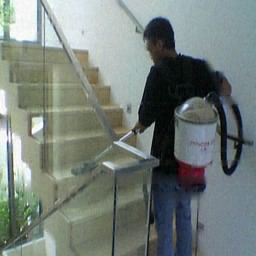}
        \caption*{DiffBCP}
    \end{subfigure}
    \caption{More visualization for FFHQ and ImageNet datasets.}\label{fig:denoising_results_app}
    \vspace{-1em}
\end{figure*}


\end{document}